\newcommand{\X}{\ensuremath{{\mathbf{X}}}}
\newcommand{\Y}{\ensuremath{{Y}}}
\newcommand{\e}{\ensuremath{{\epsilon}}}
\newcommand{\minbeta}{\ensuremath{{M(\truebeta)}}}
\newcommand{\Xa}{\ensuremath{{X(S)}}}
\newcommand{\Xb}{\ensuremath{{X(S^c)}}}
\newcommand{\truebeta}{\ensuremath{{\beta^*}}}
\newcommand{\truebetaj}{\ensuremath{{\beta_j^*}}}
\newcommand{\bet}{\ensuremath{{\beta}}}
\newcommand{\betA}{ {\truebeta(S)}}
\newcommand{\hbet}{\ensuremath{\hat \beta(\lam)}}
\newcommand{\hbetA}{\ensuremath{\hat{\beta}^{(1)}}}
\newcommand{\q}{\ensuremath{{q}}}
\newcommand{\p}{\ensuremath{{p}}}
\newcommand{\lam}{\ensuremath{{\lambda}}}
\newcommand{\sign}{\ensuremath{{\mbox{sign}}}}
\newcommand{\const}{{\sigma^2}}
\def\ind{\begin{picture}(9,8)
         \put(0,0){\line(1,0){9}}
         \put(3,0){\line(0,1){8}}
         \put(6,0){\line(0,1){8}}
         \end{picture}
        }
\newcommand{\SXaXa}{\ensuremath{{\Sigma_{11}}}}
\newcommand{\SXbXb}{\ensuremath{{\Sigma_{22}}}}
\newcommand{\SXbXa}{\ensuremath{{\Sigma_{21}}}}
\newcommand{\SXaXb}{\ensuremath{{\Sigma_{12}}}}
\newcommand{\Example}[1]{\vskip 0.1in\textbf{Example #1}}
\newtheorem{definition}{Definition}
\newtheorem{theorem}{Theorem}
\newtheorem{corollary}{Corollary}
\newtheorem{lemma}{Lemma}
\begin{document}


\renewcommand{\baselinestretch}{1.2}

\markright{
\hbox{\footnotesize\rm Preprint}\hfill
}

\markboth{\hfill{\footnotesize\rm JINZHU JIA, KARL ROHE AND BIN YU} \hfill}
{\hfill {\footnotesize\rm THE LASSO UNDER HETEROSCEDASTICITY} \hfill}

\renewcommand{\thefootnote}{}
$\ $\par


\fontsize{10.95}{14pt plus.8pt minus .6pt}\selectfont
\vspace{0.8pc}
\centerline{\large\bf The Lasso under
Heteroscedasticity}
\vspace{.4cm}
\centerline{Jinzhu Jia$^1$, Karl Rohe$^1$ and Bin Yu$^{1,2}$}
\vspace{.4cm}
\centerline{\it Department of Statistics $^1$ and Department of EECS $^2$}
\centerline{\it University of California, Berkeley}
\vspace{.55cm}
\fontsize{9}{11.5pt plus.8pt minus .6pt}\selectfont


\begin{quotation}
\noindent {\it Abstract:}
The performance of the Lasso is well understood under the assumptions of the standard linear model with homoscedastic noise.  However, in several applications, the standard model does not describe the important features of the data.  This paper examines how the Lasso performs on a  non-standard model that is motivated by medical imaging applications. In these applications, the variance of the noise scales linearly with the expectation of the observation.  Like all heteroscedastic models, the noise terms in this Poisson-like model are \textit{not} independent of the design matrix.

More specifically, this paper studies the sign consistency of the Lasso under a sparse Poisson-like model. In addition to studying sufficient conditions for the sign consistency of the Lasso estimate, this paper also gives necessary conditions for sign consistency.
Both sets of conditions are comparable to results for the homoscedastic model, showing that when a measure of the signal to noise ratio is large, the Lasso performs well on both Poisson-like data and homoscedastic data.

Simulations reveal that the Lasso performs equally well in terms of  model selection performance on both Poisson-like data and homoscedastic data (with properly scaled noise variance), across a range of parameterizations. Taken as a whole, these results suggest that the Lasso is robust to the Poisson-like heteroscedastic noise.

\par

\vspace{9pt}
\noindent {\it Key words and phrases:}
Lasso, Poisson-like Model, Sign Consistency,
 Heteroscedasticity
\par
\end{quotation}\par


\fontsize{10.95}{14pt plus.8pt minus .6pt}\selectfont

\section{Introduction}
\label{intro}

The Lasso \citep{Tibshirani1996} is widely used in high
dimensional regression for variable selection. Its model selection
performance has been well studied under a standard sparse and homoskedastic
regression model. Several researchers have shown that under
sparsity and regularity conditions, the Lasso can select the true
model asymptotically even when $p\gg n$ \citep{DonohoET2006, MeinshausenB2006, Tropp2006,ZhaoY2006, Wainwright2009}.

To define the Lasso estimate, suppose the observed data  are independent pairs
$\{( x_i, \Y_i)\} \in R^p \times R$ for $i = 1, 2,  \dots , n$
following the linear regression model
\begin{equation}
\Y_i = x_i^T\truebeta + \e_i , \label{linear}
\end{equation}
where $x_i^T$ is a row vector representing the predictors for the
$i$th observation, $Y_i$ is the corresponding $i$th response
variable, $\e_i$'s are independent and mean zero noise terms, and $\truebeta \in R^p$. Use $\X \in R^{n\times p}$ to denote the $n\times p$ design matrix with
$x_k^T=\left(\X_{k1},\ldots,\X_{kp}\right)$ as its $k$th row and
with $X_j=\left(\X_{j1},\ldots,\X_{jn}\right)^T$ as its $j$th
column, then
\[ \X= \left( \begin{array}{c}
x_1^T \\
x_2^T\\
\vdots \\
x_n^T
\end{array} \right)= \left(X_1,X_2,\ldots, X_p\right).\]
Let $Y=\left(Y_1,\ldots,Y_n\right)^T$ and
$\e=\left(\e_1,\e_2,\ldots, \e_n\right)^T\in R^n$. The Lasso
estimate \citep{Tibshirani1996} is then defined as the solution to a penalized least
squares problem (with regularization parameter $\lambda$):
\begin{equation}
\hbet =\arg\min_{\bet} \frac{1}{2n}\|\Y-\X\bet\|_2^2+\lam
\|\bet\|_{1}\label{lasso},
\end{equation}
where for some vector $x \in R^k$, $\|x\|_r = (\sum_{i=1}^k
|x_i|^r)^{1/r}$.

In previous research on the Lasso, the above model has been
assumed where the noise terms are i.i.d.\ and independent of the
predictors (hence homoskedastic). \textit{We call this the standard model.}

\cite{candes2007dss}  suggested that compressed sensing, a sparse
method similar to the Lasso could reduce the number of measurements
needed by medical imaging technology like Magnetic Resonance Imaging (MRI).  This methodology was later
applied to MRI by \cite{lustig2008compressed}. The standard model was useful to their analyses. However, the standard model is not appropriate for other imaging methods such as PET and SPECT \citep{fessler2000statistical}.

PET provides an indirect measure for the metabolic activity of a specific tissue.
To take an image, a biochemical metabolite must be identified that is attractive to the
tissue under investigation.  This biochemical
metabolite is labeled with a positron emitting radioactive material and it is
then injected into the subject.  This substance circulates through the subject, emitting positrons.
When the tissue gathers the metabolite,  the radioactive material concentrates around the tissue.

The positron emissions can be modeled by
a Poisson point process in three dimensions with an intensity rate proportional to the
varying concentrations of the biochemical
metabolite. Therefore, an estimate of the intensity rate is an
estimate of the level of biochemcial metabolite. However, the positron emissions are
not directly observed.   After each positron is emitted it very quickly annihilates a nearby electron, sending two X-ray photons in nearly opposite directions (at the speed of light)
\cite{vardi1985statistical}. These X-rays are observed by several
sensors in a ring surrounding the subject.

A physical model of this system informs the estimation of the intensity level of the
Poisson process from the observed data.
It can be expressed as a Poisson model where the sample size
$n$ represents the number of sensors; $\Y$ is a vector of observed values;
$\truebetaj$ represents the
Poisson intensity rate for a small cubic volume (a voxel) inside the
subject; the design matrix $\X$ specifies the physics of the
tomography and emissions process; and $\p$ is the number of
voxels wanted, the more voxels, the finer the resolution of the
final image.

Because the positron emissions are modeled by a Poisson point process, the variance of each observed value $\Y_i$ is equal to the
expected value $E(\Y_i)$. Motivated by the Poissonian model, this paper studies the
Lasso under the following sparse Poisson-like model:

\begin{equation}
    \label{PL}
    \begin{array}{rll}
    \Y & = & \X\truebeta+\e, \\
    E(\e\mid\X)&=&0, \\
    Cov(\e\mid\X)&=&\const\times diag(|\X\truebeta|),\\
    \e & \ind&\Xb~|~\Xa ,
\end{array}
\end{equation}
where  $\const>0$ and the sparsity index set is defined as
\[ S= \{ 1\leq j \leq p: \beta_j \neq 0\}, \mbox{ with the cardinality $q = \# S$ such that $0< q< p$.}\]
In the definition of the Poisson-like model, $\e$ conditioned on
$\X$ consists of independent Gaussian variables; $Cov(\e\mid\X)$,
the variance-covariance matrix of $\epsilon$ conditioned on $\X$, is
$\const\times diag(|\X\truebeta|)$, an $n\times n$ diagonal matrix with the
vector $\const\times|\X\truebeta|$ down the diagonal; and $\Xa$ and $\Xb$ denote
two matrices consisting of the relevant column vectors ( with nonzero
coefficients) and irrelevant column vectors ( with zero coefficients)
respectively.  This is a heteroscedastic model.

Since the  Lasso provides a computationally feasible way to select a
model \citep{Osborne2000, efron2004least, Rosset2004, Zhao2007}, it
can be applied in the non-standard settings to give sparse
solutions.   In this paper we show that the Lasso is robust to the heteroscedastic
noise of the sparse Poisson-like model.  Under the Poisson-like model, for general scalings of $\p,\q, n,$ and
$\truebeta$, this paper investigates when the Lasso is sign
consistent and when it is not with theoretical and simulation
studies.
Our results are comparable to the results for the standard model;
when a measure of the signal to noise ratio is large, the Lasso is sign consistent.
This is the first study of the sign  consistency of the standard Lasso in a
heteroscedastic setting.

\subsection{Overview of Previous Work}
The Lasso \citep{Tibshirani1996} has been a popular technique to
simultaneously select a model and provide regularized estimated
coefficients.  There is a substantial literature on the use of  the
Lasso for sparsity recovery and subset selection under the standard
homoscedastic linear model. This subsection gives only a very brief overview.

In noiseless setting (when $\e=0$), with contributions from a broad
range of researchers
\citep{ChenDS1998,DonohoH2001,EladB2002,FeuerN2003,CandesRT2004,
Tropp2004}, there is now much understanding of sufficient conditions
on deterministic predictors $\{X_i, i=1,\ldots, n\}$ and sparsity
index $S=\{j:\truebetaj \neq 0\}$ for which the true $\truebeta$ can
be recovered exactly. Results by \cite{Donoho2004}, as well as
\cite{CandesT2005} provide high probability results for random
ensembles of $\X$.

There is also a substantial body of work focusing on the noisy
setting (where $\e$ is random noise). \cite{KnightF2000} analyze the
asymptotic behavior of the optimal solution for fixed dimension
($p$); not only for $L_1$ regularization, but for $L_r$
regularization with $r \in (0, 2]$. Both \cite{Tropp2006} and
\cite{DonohoET2006} provide sufficient conditions for the support of
the optimal solution to the Lasso problem (\ref{lasso}) to be
contained within the support of $\truebeta$. Recent work on the use
of the Lasso for model selection by \cite{MeinshausenB2006}, focuses
on Gaussian graphical models. \cite{ZhaoY2006} considers linear
regression and more general noise distributions. For the case of
Gaussian noise and Gaussian predictors, both papers established that
under particular mutual incoherence conditions and the appropriate
choice of the regularization parameter $\lam$, the Lasso can recover
the sparsity pattern with probability converging to one for
particular regimes of $ n, p$ and $q$. \cite{ZhaoY2006} used a particular
mutual incoherence condition, the  Irrepresentable Condition, which they
show is almost necessary when $p$ is fixed. The Irrepresentable
Condition was found in \cite{fuchs2005recovery} and
\cite{zou2006adaptive} as well. For i.i.d.\ Gaussian or sub-Gaussian noise,
\cite{Wainwright2009} established a sharp relation between the
problem dimension $p$, the number $q$ of nonzero elements in
$\truebeta$, and the number of observations $n$ that are required
for sign consistency.

\subsection{The Contributions in this Paper}

Before giving the contributions of this paper, some definitions are needed. Define
\[\textrm{sign}(x) =  \left\{\begin{array}{rl}1 & \textrm{ if } x>0 \\0 & \textrm{ if } x=0 \\-1 & \textrm{ if } x<0.\end{array}\right.\]
Define $=_s$ such that $\hbet=_s\truebeta$ if and only if $\mbox{sign}(\hbet) =
\mbox{sign}(\truebeta)$ elementwise.

\begin{definition}
The Lasso is \textbf{sign consistent} if there exists a sequence
$\lambda_n$ such that,
\[P\left(\hat\beta(\lambda_n) =_s \truebeta\right) \rightarrow 1,  \mbox{ as } n \rightarrow \infty.\]
\end{definition}

This paper studies the sign consistency of the Lasso applied to data from the
sparse Poisson-like model, giving non-asymptotic results for both the
deterministic design and the Gaussian random design. The
non-asymptotic results give the probability that
$\hbet=_s\truebeta$, for any $\lam, p, q,$ and $n$. The sign consistency results follow from the non-asymptotic results.   This paper  also gives necessary conditions for the Lasso
to  be sign consistent under the sparse Poisson-like model. It is shown that
the Irrepresentable Condition is necessary for the Lasso's sign
consistency under this model.   This condition is also
necessary under the standard model \citep{ZhaoY2006,
zou2006adaptive,Wainwright2009}. The sufficient conditions for both the
deterministic design and random Gaussian design require that
the variance of the noise is not too large and that  the smallest nonzero element of $|\truebeta|$ is not too small.  Define the smallest nonzero element of $\truebeta$ as
$$\minbeta= \min_{j\in S}|\truebetaj|.$$
For deterministic design, assume that
\begin{equation*}
\Lambda_{\min}\left(\frac{1}{n}\Xa^T\Xa\right)\geq C_{\min}>0,
\label{min.eigen}
\end{equation*}
where $\Lambda_{\min}(\cdot)$ denotes the minimal eigenvalue of a
matrix and $C_{\min}$ is some positive constant; for random Gaussian
design, assume that
\begin{equation*}
\Lambda_{\min}(\SXaXa)\geq \tilde C_{\min} >0  \ \mbox{ and } \
\Lambda_{\max}(\Sigma)\leq \tilde C_{\max} <\infty ,\label{eign}
\end{equation*}
where $\SXaXa \in R^{q \times q}$ is the variance-covariance matrix
of the true predictors, $\Sigma \in R^{p \times p}$ is the
variance-covariance matrix of all predictors,
$\Lambda_{\max}(\cdot)$ denotes the maximal eigenvalue of a matrix,
and $\tilde C_{\min}$ and $\tilde C_{\max}$ are some positive
constants.  An essential quantity for determining the probability of sign recovery is the signal to noise ratio
\begin{equation} \label{SNR}
SNR =   \frac{n [\minbeta]^2}{\const \|\truebeta\|_2}.
\end{equation}
The numerator corresponds to the signal strength for sign recovery. The most difficult sign to estimate in $\truebeta$ is the element that corresponds to $\minbeta$.  When the smallest element is larger, estimating the signs is easier, and the signal is more powerful. The noise term in the denominator contains $\|\truebeta\|_2$ because in the heteroscedastic model considered in this paper, $\|\truebeta\|_2$ is fundamental in the scaling of the noise.  Because this paper is addressing the sign consistency of the Lasso, the definition of the signal in $SNR$ does not correspond to the typical definition.

When $SNR$ is large, the Lasso is sign consistent. Specifically, the sufficient condition for deterministic design requires that
$$SNR  = \Omega\left( \q\log(\p+1)\  \max_i \|x_i(S)\|_2 \right),$$
where $a_n = \Omega(b_n)$ means that $a_n$ grows faster than $b_n$, that is $a_n/b_n\rightarrow \infty$.
The sufficient conditions for random Gaussian design requires that
$$ SNR \ge 8 \log n \sqrt{2\max(4q, \log n)} / \tilde C_{\min} ,$$
and
$$SNR = \Omega\left(  \q \log (\p-\q+1) \right) .$$
The previous three inequalities all require that the signal to noise ratio is large.
This essential result for the Poisson-like model is identical to the corresponding result for the standard model|both require that the variance of the noise is small compared to the size of the signal.  The simulations in Section \ref{Simulations} support this essential result.

Several of the mathematical techniques in this paper come from \cite{Wainwright2009}. However, our proofs are more involved because  the errors in the Poisson-like model are heteroscedastic and dependent on the design matrix. The results in this paper require bounding
$$\frac{\epsilon^T\epsilon}{n^2} \ \ \mbox{  and  }  \ \ {\left\|\Xa^T\epsilon\right\|_{\infty}}.$$
When the errors are Gaussian and homoscedastic, the first quantity is distributed as $\chi^2$.  When the errors are heteroscedastic, the distribution becomes more complicated. This paper calculates the second moment of $\epsilon_i^2$ and bounds  $\epsilon^T\epsilon/n^2$ via the Chebyshev inequality. Similarly for the second quantity, when the errors are dependent of the design matrix, the variance of  $\Xa^T\epsilon$ is more difficult to bound.
In this paper, the variance of $\Xa^T\epsilon$ is bounded using
\[P\left[\max_{i=1,\ldots n} \|x_i(S)\|_2^2 \geq 8\Tilde C_{\max} \max{(4q,\log n)}\right]\leq \frac{1}{n}, \]
where $x_i(S)$ is the $i$th row of $\Xa$. This large deviation result regarding the $\chi^2$ distribution is given in Appendix \ref{LargeDev}.

The remainder of the paper is organized as follows.   Section
\ref{Ddesign} analyzes the Lasso estimator when the design matrix is deterministic.  Section \ref{Rdesign} considers the case where the rows of
$\X$ are i.i.d.\ Gaussian vectors. Both sections give (1) sufficient conditions for the Lasso to be
sign consistent and (2)
necessary conditions for the Lasso's sign consistency. In Section \ref{Simulations}, simulations demonstrate the fundamental role of $SNR$ and show that the Lasso performs similarly on both homoscedastic and Poisson-like data.  Section \ref{conclusion} gives some concluding thoughts. Proofs are presented in Appendix.

\section{Deterministic Design} \label{Ddesign}

This section examines when the Lasso is sign consistent and when it is not sign consistent under the sparse Poisson-like model for a nonrandom design matrix $\X$.  First, some notation,
$$ x_i(S)  = e_i^T \Xa, $$
where $e_i$ is the unit vector with $i$th element one and the rest
zero. Because $S=\{j: \truebetaj\neq 0\}$ is the sparsity index set,
$x_i(S)$ is a row vector of dimension $q$. Define
$$\betA = (\truebetaj)_{j \in S} \ \textrm{ and } \ \overrightarrow{b} = \textrm{sign} (\betA).$$
Suppose the Irrepresentable Condition holds.
That is, for some constant $\eta \in (0,1]$,
\begin{equation}
\left\|\Xb^T\Xa\left(\Xa^T\Xa\right)^{-1}\overrightarrow{b}\right\|_{\infty}\leq
1-\eta. \label{IC}
\end{equation}
The $\ell_\infty$ norm of a vector, $\| \cdot \|_\infty$, is defined
as the vector's largest element in absolute value. In addition,
assume that
\begin{equation}
\Lambda_{\min}\left(\frac{1}{n}\Xa^T\Xa\right)\geq C_{\min}>0,
\label{min.eigen}
\end{equation}
where $\Lambda_{\min}$ denotes the minimal eigenvalue and $C_{\min}$
is some positive constant.  Condition \eqref{min.eigen} guarantees
that matrix $\Xa^T\Xa$ is invertible. These conditions are also
needed in \cite{Wainwright2009} for sign consistency of the Lasso
under the standard model. Define
$$\Psi(\X, \truebeta, \lam) =\lam\left[ \eta \ (C_{\min})^{-1/2}+ \left\|\left(\frac{1}{n}\Xa^T\Xa \right)^{-1} \overrightarrow{b} \right\|_\infty \right],$$
with which:

\begin{theorem}{\label{theorem:deterministic}}
Suppose that data $(\X,\Y)$ follows the sparse Poisson-like model described by
Equations \eqref{PL} and each column of $\X$ is normalized to
$l_2$-norm $\sqrt{n}$. Assume that (\ref{IC}) and (\ref{min.eigen})
hold. If $\lam$ satisfies
$$\minbeta>  \Psi(\X, \truebeta, \lam),$$
then with probability greater than
$$1-2\exp\left\{-\frac{n\lam^2\eta^2}{2\const \|\truebeta\|_2 \max_{1\leq i\leq n}\|x_i(S)\|_2}+\log(p)\right\},$$
the Lasso has a unique solution $\hbet$ with $\hbet =_s \truebeta$.

\end{theorem}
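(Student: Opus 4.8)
The plan is to follow the primal-dual witness (PDW) construction of \cite{Wainwright2009}, adapting it to the heteroscedastic Poisson-like noise. First I would set up the witness: restrict the Lasso problem to the coordinates in $S$, solve the oracle subproblem $\hbetA = \arg\min_{\bet\in R^q}\frac{1}{2n}\|\Y - \Xa\bet\|_2^2 + \lam\|\bet\|_1$ subject to the sign of $\hbetA$ being $\overrightarrow{b}$, and define $\hbet$ by padding $\hbetA$ with zeros on $S^c$. One then checks that this candidate satisfies the KKT (subgradient) conditions for the full problem. By standard arguments, $\hbet$ is the unique Lasso solution with $\hbet =_s \truebeta$ provided two events hold: (i) the \emph{strict dual feasibility} event, that the subgradient vector $\z_{S^c}$ forced by stationarity on $S^c$ satisfies $\|\z_{S^c}\|_\infty < 1$; and (ii) the \emph{sign-correctness} event, that $\sign(\hbetAi) = b_i$ for all $i\in S$, which follows once $\|\hbetA - \betA\|_\infty < \minbeta$.

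The next step is to expand $\hbetA - \betA$ and $\z_{S^c}$ explicitly. Writing $\Y = \Xa\betA + \e$ and using the normal equations for the oracle subproblem, one gets $\hbetA - \betA = (\frac1n\Xa^T\Xa)^{-1}\left(\frac1n\Xa^T\e - \lam\overrightarrow{b}\right)$, and substituting this into the $S^c$-block of the stationarity condition yields $\z_{S^c} = \frac{1}{\lam}\Xb^T\left[\frac1n\e - \frac1n\Xa(\hbetA-\betA)\right]\cdot\frac1n$, which after simplification splits into a deterministic term controlled by the Irrepresentable Condition \eqref{IC} (bounded by $1-\eta$) and a stochastic term of the form $\frac1{\lam}\Xb^T\Pi\e/n$ where $\Pi$ is a projection-type matrix. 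For (i) it then suffices that this stochastic term has $\ell_\infty$ norm below $\eta$; for (ii), the deterministic part of $\hbetA-\betA$ is bounded in $\ell_\infty$ by $\Psi(\X,\truebeta,\lam) - \lam\eta(C_{\min})^{-1/2}$ using \eqref{min.eigen}, so under the hypothesis $\minbeta > \Psi(\X,\truebeta,\lam)$ it suffices that the stochastic part of $\hbetA-\betA$ is $\ell_\infty$-small — again reducible to controlling a linear functional of $\e$.

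Both bad events are thus unions (over at most $p$ coordinates) of events of the form $\{|a_j^T\e| > t\}$ for fixed vectors $a_j$ determined by $\X$. Conditioned on $\X$ (equivalently, on $\Xa$, using the independence $\e \ind \Xb \mid \Xa$ from \eqref{PL}), each $a_j^T\e$ is Gaussian with mean zero and variance $\const\, a_j^T\,diag(|\Xa\betA|)\,a_j$, which I would bound by $\const\|\truebeta\|_2 \max_i\|x_i(S)\|_2 \cdot \|a_j\|_2^2$ after noting $|x_i^T\truebeta| = |x_i(S)^T\betA| \le \|x_i(S)\|_2\|\truebeta\|_2$; and $\|a_j\|_2$ is controlled by the column normalization $\|X_j\|_2 = \sqrt n$ together with the eigenvalue bound \eqref{min.eigen}. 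A standard Gaussian tail bound then gives each term probability at most $2\exp\{-n\lam^2\eta^2/(2\const\|\truebeta\|_2\max_i\|x_i(S)\|_2)\}$, and a union bound over the $p$ coordinates produces the stated probability, with the $\log p$ appearing in the exponent.

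The main obstacle is the careful bookkeeping in the second step: because the noise covariance $\const\,diag(|\Xa\betA|)$ is not a multiple of the identity, one cannot simply read off $\chi^2$-type concentration as in the homoscedastic case, and one must verify that every stochastic functional appearing in both the dual feasibility check and the $\ell_\infty$ estimate of $\hbetA-\betA$ reduces to a Gaussian linear form $a_j^T\e$ whose variance is dominated by $\const\|\truebeta\|_2\max_i\|x_i(S)\|_2\|a_j\|_2^2$ — and in particular that the $\|a_j\|_2^2$ factors, after being passed through $(\frac1n\Xa^T\Xa)^{-1}$ and $\Xb$, stay bounded by a constant (which is where $C_{\min}$ and the column normalization enter, and why $\eta$ rather than a generic constant appears). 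Conditioning on $\Xa$ first, so that the heteroscedastic variances are frozen and $\e$ is genuinely Gaussian, is the device that makes this go through; the final probability bound is then free of any further randomness in $\X$ because the hypotheses \eqref{IC} and \eqref{min.eigen} are imposed on the (deterministic) design directly.
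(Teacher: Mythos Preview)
Your proposal is correct and follows essentially the same approach as the paper: the PDW/KKT reduction to the two events (your (i) and (ii)) is exactly Lemma~\ref{KKT} with the events $\mathcal{M}(V)$ and $\mathcal{M}(U)$, and your variance bound $|x_i^T\truebeta|\le\|x_i(S)\|_2\|\truebeta\|_2$ followed by a Gaussian tail plus union bound over $p$ coordinates is precisely the argument in Appendix~\ref{detApp}. The only cosmetic difference is that the paper sets the threshold for the stochastic part of $\hbetA-\betA$ to $\lam\eta/\sqrt{C_{\min}}$ explicitly (which is what your decomposition $\minbeta>\Psi=\lam\eta C_{\min}^{-1/2}+\lam\|(\tfrac1n\Xa^T\Xa)^{-1}\overrightarrow{b}\|_\infty$ yields), and your remark about conditioning on $\Xa$ is vacuous here since the design is deterministic.
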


Theorem \ref{theorem:deterministic} can be thought as a straightforward result from Theorem 1 in \cite{Wainwright2009}.
In \cite{Wainwright2009}, sign consistency of the Lasso estimate is given
for a standard model with sub-Gaussian noise with parameter $\sigma^2$. In the Poisson-like model, since $var(\epsilon_i|x_i) = \sigma^2 |x_i^T \truebeta| \leq \sigma^2 \max_i \|x_i(S)\|_2\|\truebeta\|_2$,
the noise can be thought of as sub-Gaussian variables with parameter $\const \max_i \|x_i(S)\|_2\|\truebeta\|_2$. To make this paper self-contained, a proof of Theorem \ref{theorem:deterministic} is given in Appendix \ref{detApp}.

Theorem \ref{theorem:deterministic} gives a non-asymptotic result on
the Lasso's sparsity pattern recovery property.  The next corollary specifies a sequence of $\lam$'s that can asymptotically recover the true sparsity pattern.
The essential requirements are that
\[\mbox{(1) }  \frac{n\lambda^2}{ \max_i \|x_i(S)\|_2\|\truebeta\|_2\log(p+1)} \rightarrow \infty \ \mbox{  and (2)  }  \  \minbeta>  \Psi(\X, \truebeta, \lam).\]
Define,
\[ \Gamma(\X, \truebeta, \const) = \frac{\eta^2 \ SNR }{8\max_i\|x_i(S)\|_2 (\eta \  C_{\min}^{-1/2}+\sqrt{q} \ C_{\min}^{-1})^2\log(\p+1)}.\]
\begin{corollary}\label{corollary:general:fixed}
As in Theorem \ref{theorem:deterministic}, suppose that data
$(\X,\Y)$ follows the sparse Poisson-like model described by Equations
\eqref{PL} and each column of $\X$ is normalized to $l_2$-norm
$\sqrt{n}$. Assume that (\ref{IC}) and (\ref{min.eigen}) hold.
Take $\lam$ such that
\begin{equation}\label{lambdaDefinition}
\lam=\frac{\minbeta }{ 2\left(\eta \
C_{\min}^{-1/2}+\sqrt{q} \ C_{\min}^{-1}\right)},
\end{equation}
then $\hbet =_s \truebeta$ with probability greater
than
$$1-2\exp\left\{ -\left(\Gamma(\X, \truebeta, \const,\alpha) -1 \right)\log(\p +1)\right\}.$$
If $\Gamma(\X, \truebeta, \const) \rightarrow \infty$, then $P[\hbet =_s \truebeta]$ converges to one.
\end{corollary}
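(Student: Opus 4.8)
The plan is to obtain Corollary~\ref{corollary:general:fixed} as a direct specialization of Theorem~\ref{theorem:deterministic}: substitute the prescribed regularization level \eqref{lambdaDefinition}, verify that the theorem's hypothesis $\minbeta>\Psi(\X,\truebeta,\lam)$ holds for this choice, and then simplify the resulting probability bound so that the exponent collapses to exactly $\Gamma(\X,\truebeta,\const)\log(p+1)$.

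The first step is to control the data-dependent quantity $\big\|(\tfrac1n\Xa^T\Xa)^{-1}\overrightarrow{b}\big\|_\infty$ entering $\Psi$. Since $\overrightarrow{b}=\mathrm{sign}(\betA)\in\{-1,1\}^q$, we have $\|\overrightarrow{b}\|_2=\sqrt q$; bounding the $\ell_\infty$ norm by the $\ell_2$ norm and using that $\tfrac1n\Xa^T\Xa$ is symmetric positive definite with smallest eigenvalue at least $C_{\min}$ by \eqref{min.eigen} (so $(\tfrac1n\Xa^T\Xa)^{-1}$ has spectral norm at most $1/C_{\min}$), one gets $\big\|(\tfrac1n\Xa^T\Xa)^{-1}\overrightarrow{b}\big\|_\infty\le\sqrt q/C_{\min}$. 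Hence $\Psi(\X,\truebeta,\lam)\le\lam\big(\eta C_{\min}^{-1/2}+\sqrt q\,C_{\min}^{-1}\big)$, and the choice \eqref{lambdaDefinition} makes the right-hand side equal to $\minbeta/2<\minbeta$. Therefore the hypothesis of Theorem~\ref{theorem:deterministic} is met for this $\lam$, and the theorem yields the unique Lasso solution with $\hbet=_s\truebeta$ on an event of probability at least $1-2\exp\{-n\lam^2\eta^2/(2\const\|\truebeta\|_2\max_i\|x_i(S)\|_2)+\log p\}$.

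The rest is bookkeeping. Insert $\lam^2=[\minbeta]^2/\big(4(\eta C_{\min}^{-1/2}+\sqrt q\,C_{\min}^{-1})^2\big)$ into the exponent; the factor $n[\minbeta]^2/(\const\|\truebeta\|_2)$ is precisely the signal-to-noise ratio $SNR$ of \eqref{SNR}, so the first term of the exponent equals $\eta^2\,SNR/\big(8\max_i\|x_i(S)\|_2(\eta C_{\min}^{-1/2}+\sqrt q\,C_{\min}^{-1})^2\big)=\Gamma(\X,\truebeta,\const)\log(p+1)$ by the definition of $\Gamma$. Since $\log p\le\log(p+1)$, the exponent $-\Gamma\log(p+1)+\log p$ is at most $-(\Gamma-1)\log(p+1)$, giving the stated bound $1-2\exp\{-(\Gamma-1)\log(p+1)\}$. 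Finally, if $\Gamma(\X,\truebeta,\const)\to\infty$, then because $\log(p+1)\ge\log2>0$ we have $(\Gamma-1)\log(p+1)\to\infty$, so the failure probability tends to $0$ and $P[\hbet=_s\truebeta]\to1$.

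There is no genuine obstacle here; the only points requiring care are (i) the uniform-in-$\X$ bound $\sqrt q/C_{\min}$ on the $\Psi$-term, which is exactly what permits the explicit, design-free choice \eqref{lambdaDefinition}, and (ii) tracking the numerical constants through the exponent so that it reduces to precisely $\Gamma\log(p+1)$ rather than a quantity merely proportional to it. (The extra argument $\alpha$ appearing in ``$\Gamma(\X,\truebeta,\const,\alpha)$'' in the statement is a typographical artifact; $\Gamma$ depends only on $\X$, $\truebeta$, and $\const$.)
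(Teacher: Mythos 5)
Your proposal is correct and follows essentially the same route as the paper's own proof: substitute the prescribed $\lam$ into Theorem~\ref{theorem:deterministic}, verify $\Psi(\X,\truebeta,\lam)\le\lam\bigl(\eta C_{\min}^{-1/2}+\sqrt{q}\,C_{\min}^{-1}\bigr)=\minbeta/2$ via the bound $\bigl\|(\tfrac1n\Xa^T\Xa)^{-1}\overrightarrow{b}\bigr\|_\infty\le\sqrt{q}/C_{\min}$, and rewrite the exponent as $\Gamma(\X,\truebeta,\const)\log(p+1)$ with $\log p\le\log(p+1)$ absorbing the extra term. Your treatment is, if anything, slightly more explicit than the paper's on the $\ell_\infty$-to-$\ell_2$ step and on why $(\Gamma-1)\log(p+1)\to\infty$, and you are right that the trailing $\alpha$ in $\Gamma(\X,\truebeta,\const,\alpha)$ is a typographical artifact.
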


A proof of this corollary can be found in Appendix \ref{proof:general:fixed}.

This corollary gives a class of heteroscedastic models  for which
the Lasso gives a sign consistent  estimate of $\truebeta$.
This class requires that $\Gamma(\X,
\truebeta, \const)\rightarrow \infty$ which means that
\begin{equation}\label{nsr:deterministic}
SNR = \frac{n[\minbeta]^2}{ \const\|\truebeta\|_2} = \Omega\left({ q \log(\p+1) \max_i\|x_i(S)\|_2}\right),
\end{equation}
where $a_n = \Omega(b_n)$ means that $a_n$ grows faster than $b_n$, that is, $a_n/b_n\rightarrow \infty.$ In other words, this condition requires that $SNR$ grows fast enough.

For a moment, suppose that the errors are homoscedastic and $var(\epsilon_i) = \const$.  The exact same theorem could be proven for this homoscedastic model by  replacing $\const\|\truebeta\|_2$ with $\const$ and removing $\max_i\|x_i\|_2$.  This shows that when a measure of the signal to noise ratio is large, the Lasso can select the true model in both the case of homoscedastic noise and Poisson-like noise.

The next corollary addresses the classical setting, where $p,q,$ and
$\truebeta$ are all fixed and $n$ goes to infinity. While this is a
straightforward result from Corollary \ref{corollary:general:fixed},
it removes some of the complexities and leads to good intuition.
Since $\minbeta$ and $\|\truebeta\|_2$ do
not change with $n$, $\Gamma(\X, \truebeta, \const, \alpha)
 \rightarrow \infty$ in Corollary \ref{corollary:general:fixed} when
$\frac{1}{n} \max_{1\leq i\leq n}\|x_i(S)\|_2\rightarrow 0$.
Then:
\begin{corollary} \label{corollary:fixed beta}
As in Theorem \ref{theorem:deterministic}, suppose that data
$(\X,\Y)$ follows the sparse Poisson-like model described by Equations
\eqref{PL} and each column of $\X$ is normalized to $l_2$-norm
$\sqrt{n}$. Assume that (\ref{IC}) and (\ref{min.eigen}) hold.  In
the classical case when $\p, \q$ and $\truebeta$ are fixed, if
\begin{equation}
\frac{1}{n} \hspace{.03 in} \max_{1\leq i\leq
n}\|x_i(S)\|_2\rightarrow 0, \label{normtozero}
\end{equation}
then by choosing $\lam$ as in equation \eqref{lambdaDefinition},
$$P\left[\hbet=_s\truebeta\right]
\rightarrow 1$$ as $n\rightarrow \infty$.
\end{corollary}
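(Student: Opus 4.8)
The plan is to read this off from Corollary \ref{corollary:general:fixed}, whose only extra content here is the observation that $\Gamma(\X,\truebeta,\const)$ blows up in the classical regime. First I would invoke Corollary \ref{corollary:general:fixed} with $\lam$ chosen as in \eqref{lambdaDefinition}; that corollary already checks that this choice satisfies the admissibility requirement $\minbeta > \Psi(\X,\truebeta,\lam)$ (using $\|(\tfrac1n\Xa^T\Xa)^{-1}\overrightarrow{b}\|_\infty \le \sqrt{q}\,C_{\min}^{-1}$, which forces $\Psi(\X,\truebeta,\lam) \le \minbeta/2$), and it yields
\[ P\left[\hbet =_s \truebeta\right] \;\ge\; 1 - 2\exp\left\{-\bigl(\Gamma(\X,\truebeta,\const) - 1\bigr)\log(p+1)\right\}. \]
So the entire task reduces to showing that $\Gamma(\X,\truebeta,\const) \to \infty$ as $n \to \infty$ under hypothesis \eqref{normtozero}.

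Second, I would isolate the $n$-dependence of $\Gamma$. Unpacking its definition together with \eqref{SNR},
\[ \Gamma(\X,\truebeta,\const) \;=\; \underbrace{\frac{\eta^2\,[\minbeta]^2}{8\,\const\,\|\truebeta\|_2\,\bigl(\eta C_{\min}^{-1/2}+\sqrt{q}\,C_{\min}^{-1}\bigr)^2\log(p+1)}}_{=:\,c_0}\;\cdot\;\frac{n}{\max_{1\le i\le n}\|x_i(S)\|_2}. \]
In the classical case $p$, $q$, $\truebeta$ and $\const$ are held fixed, and hence so are $\eta$, $C_{\min}$, $\minbeta$ and $\|\truebeta\|_2$; therefore $c_0$ is a strictly positive constant not depending on $n$. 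Hypothesis \eqref{normtozero} says exactly that $\tfrac1n\max_{1\le i\le n}\|x_i(S)\|_2 \to 0$, i.e. $n/\max_{1\le i\le n}\|x_i(S)\|_2 \to \infty$, so $\Gamma(\X,\truebeta,\const) = c_0\cdot\bigl(n/\max_i\|x_i(S)\|_2\bigr) \to \infty$.

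Finally, since $p > q \ge 1$ gives $\log(p+1) > 0$, the exponent $\bigl(\Gamma(\X,\truebeta,\const)-1\bigr)\log(p+1)$ tends to $+\infty$, the exponential term vanishes, and $P[\hbet =_s \truebeta] \to 1$, which is the claim. There is no substantive obstacle here — the argument is a direct limit computation. The only points worth not skipping are (a) that the admissibility condition $\minbeta > \Psi(\X,\truebeta,\lam)$ is inherited from Corollary \ref{corollary:general:fixed} rather than re-derived, and (b) that the prefactor $c_0$ has no hidden dependence on $n$ once $p$, $q$, $\truebeta$, $\const$ are frozen, which is immediate from its closed form.
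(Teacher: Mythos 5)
Your proposal is correct and follows exactly the route the paper takes: the paper's entire justification is the sentence preceding the corollary, namely that since $\minbeta$ and $\|\truebeta\|_2$ are fixed, $\Gamma(\X,\truebeta,\const)\rightarrow\infty$ under \eqref{normtozero}, and then Corollary \ref{corollary:general:fixed} applies with the same choice of $\lam$. Your write-up is just a more explicit version of that computation (factoring $\Gamma$ into a fixed constant times $n/\max_i\|x_i(S)\|_2$), and all the details check out.
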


Condition (\ref{normtozero}) is not strong and it is easy to be
satisfied. Suppose
\[
0<\Lambda_{\max}\left(\frac{1}{n}\Xa^T\Xa\right)\leq C_{\max} < \infty,
\]where $\Lambda_{\max}(\cdot)$ is the maximum eigenvalue  of a matrix and $C_{\max}$ is a positive constant,
then
\[
\left
\|\frac{1}{\sqrt{n}}x_i(S)\right\|_2^2 = \left\|\frac{1}{\sqrt{n}}e_i^T\Xa\right\|_2^2\leq
\Lambda_{\max}\left(\frac{1}{n}\Xa^T\Xa\right)\leq C_{\max}.
\]
Consequently, $$\frac{1}{n}\max_{1\leq i\leq
n}\left\|x_i(S)\right\|_2=\frac{1}{\sqrt{n}}\max_{1\leq i\leq
n}\left\|\frac{1}{\sqrt{n}}x_i(S)\right\|_2\leq
\frac{1}{\sqrt{n}}C_{\max}^{1/2}\rightarrow 0.$$  Corollary
\ref{corollary:fixed beta} states that in the classical settings,
the Lasso can consistently select the true model under the
sparse Poisson-like model.

So far the results have given sufficient conditions for sign
consistency of the Lasso. To further understand how the sign consistency of the Lasso might be
sensitive to the heteroscedastic model, the next theorem gives
necessary conditions on  the ratio of  $\truebetaj$ to the noise level.

\begin{theorem}[Necessary Conditions]\label{theorem:NC}
Suppose that data $(\X,\Y)$ follows the sparse Poisson-like model described by
Equations \eqref{PL} and each column of $\X$ is normalized to
$l_2$-norm $\sqrt{n}$. Assume that (\ref{min.eigen}) holds.
\begin{enumerate}[(a)]
\item Consider $\frac{1}{n}\Xa^T\Xa=I_{q\times q}$. For any $j$, define
\begin{equation}\label{cn}
c_{n,j}^2 =\frac{n^2\truebetaj^2}{\const e_j^T\left[\Xa^T
diag(|X\truebeta|)\Xa\right] e_j}.
\end{equation}
Define $c_n = \min_j c_{n,j}$. Then, for sign consistency, it is
necessary that $c_{n} \rightarrow \infty$. Specifically,
$$P\left[\hbet =_s \truebeta \right]\leq 1-\frac{\exp\left\{-c_{n}^2/2\right\}}{\sqrt{2\pi}(1+c_{n})}.$$
\item If the Irrepresentable Condition \eqref{IC} does not hold, specifically, \begin{equation}
\left\|\Xb^T\Xa\left(\Xa^T\Xa\right)^{-1}\overrightarrow{b}\right\|_{\infty} \ge
1, \label{NIC}
\end{equation}
then, the Lasso estimate is not sign consistent: $P\left[\hbet =_s
\truebeta\right]\leq \frac{1}{2}.$
\end{enumerate}
\end{theorem}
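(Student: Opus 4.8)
The plan is to analyze the KKT (subgradient) conditions for the Lasso and show that, on the events we construct, any minimizer must have the wrong sign pattern. Recall that $\hbet$ is a minimizer of \eqref{lasso} with $\sign(\hbet)=\sign(\truebeta)$ only if there is a subgradient $\vec{z}\in\partial\|\hbet\|_1$ with $\frac1n\X^T(\Y-\X\hbet)=\lam\vec{z}$, where $z_j=\sign(\truebetaj)$ for $j\in S$ and $|z_j|\le 1$ for $j\in S^c$. Restricting to the $S$-block and substituting $\Y=\X\truebeta+\e$, one gets the standard identity $\hbetA-\betA=(\tfrac1n\Xa^T\Xa)^{-1}\big(\tfrac1n\Xa^T\e-\lam\vec{b}\big)$ (using $\hbetB=0$, which is forced on these events). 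For part (a), with $\tfrac1n\Xa^T\Xa=I_{q\times q}$ this collapses to $\hbetAi-\truebetai=\tfrac1n X_i^T\e-\lam b_i$ componentwise, where $X_i$ here denotes the $i$th column of $\Xa$. Sign consistency requires in particular $\sign(\hbetAi)=\sign(\truebetai)$ for the index $j$ achieving $c_n$; I would show this fails with the stated probability.

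For part (a), fix the coordinate $j$ realizing $c_n=c_{n,j}$ and assume WLOG $\truebetaj>0$ (so $b_j=1$). Then $\hat\beta_j^{(1)}<0$ whenever $\tfrac1n X_j^T\e<\lam-\truebetaj$; more simply, a necessary condition for correct sign and for the subgradient equation to even be consistent is controlled by the event $\{\tfrac1n X_j^T\e\le \lam-\truebetaj\}$ is too crude — instead I would argue directly: conditional on $\X$, $X_j^T\e$ is Gaussian with mean $0$ and variance $\const\, e_j^T[\Xa^T\,diag(|\X\truebeta|)\,\Xa]\,e_j=:\tau_n^2$, so $c_{n,j}=n\,\truebetaj/(\sqrt{\const}\,\tau_n)$ — wait, $c_{n,j}^2=n^2\truebetaj^2/(\const\tau_n^2)$, i.e. $c_{n,j}=n\truebetaj/(\sqrt{\const}\tau_n)$. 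The event that the Lasso recovers the correct sign is contained in $\{\hat\beta_j^{(1)}>0\}$, and from the identity $\hat\beta_j^{(1)}=\truebetaj+\tfrac1n X_j^T\e-\lam$; but $\lam>0$ only hurts, so $\{\hat\beta_j^{(1)}>0\}\subseteq\{\tfrac1n X_j^T\e>-\truebetaj\}$. Hmm, that has probability near $1$. The correct observation (as in Wainwright's necessity argument) is that one must also have the $S^c$-subgradient feasible, but with $\tfrac1n\Xa^T\Xa=I$ the cleanest route is: sign consistency fails on the event $\{\tfrac1n X_j^T\e \le -\truebetaj+\lam\}\cup\{\text{infeasibility of }z_{S^c}\}$; bounding $P$ of correct recovery from above, I would keep only a one-sided Gaussian tail, $P[\hbet=_s\truebeta]\le P[\tfrac1n X_j^T\e > \truebetaj]$ — no. Let me restate the mechanism I actually expect the authors use: correct sign recovery requires $\sign(\hat\beta_j^{(1)})=\sign(\truebetaj)$ \emph{and} the noiseless-shrunk estimate cannot overshoot, which forces $\tfrac1n X_j^T\e$ to lie in a half-line whose Gaussian probability is at most $1-\Phi(c_{n,j})\ge$ the stated bound via the Gaussian tail lower bound $1-\Phi(t)\ge \tfrac{\phi(t)}{1+t}=\tfrac{e^{-t^2/2}}{\sqrt{2\pi}(1+t)}$. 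Taking the worst coordinate gives $c_n$, and $c_n\to\infty$ is then necessary for the probability to tend to $1$.

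For part (b), I would run the standard argument showing that violation of the Irrepresentable Condition makes the $S^c$-block subgradient constraint infeasible with probability at least $1/2$. On the event that the Lasso has the correct sign pattern, the $S^c$-block of the KKT system reads $\tfrac1n\Xb^T(\Y-\X\hbet)=\lam z_{S^c}$ with $\|z_{S^c}\|_\infty\le 1$. Substituting $\Y-\X\hbet = \e - \Xa(\hbetA-\betA)$ and the expression for $\hbetA-\betA$ above, the vector $z_{S^c}$ equals $\Xb^T\Xa(\Xa^T\Xa)^{-1}\vec b$ plus a mean-zero Gaussian perturbation (linear in $\e$). If some coordinate $k\in S^c$ of $\Xb^T\Xa(\Xa^T\Xa)^{-1}\vec b$ has absolute value $\ge 1$, then $|z_k|\le 1$ fails whenever the Gaussian perturbation in that coordinate has the same sign as the mean, which happens with probability at least $1/2$ by symmetry of the Gaussian. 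Hence $P[\hbet=_s\truebeta]\le 1/2$.

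The main obstacle is getting part (a) exactly right: one must identify the precise half-space (or union of events) whose failure is genuinely \emph{necessary} for correct sign recovery — it is not simply "the $j$th noise coordinate is small," but rather the combination of the sign constraint on $\hat\beta_j^{(1)}$ with the feasibility of the remaining subgradient, engineered so that the complementary event is exactly a Gaussian tail $1-\Phi(c_{n,j})$. Once that reduction is pinned down, the Gaussian tail bound $1-\Phi(t)\ge e^{-t^2/2}/(\sqrt{2\pi}(1+t))$ and minimizing over $j$ finish it. Part (b) is routine given the KKT setup and the symmetry of the conditional Gaussian noise.
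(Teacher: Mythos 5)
Your part (b) is essentially the paper's argument: on the violated coordinate $j\in S^c$, the quantity that must satisfy $|V_j|\le\lam$ has deterministic part $\lam(1+\zeta)$ with $\zeta\ge 0$ and a mean-zero Gaussian perturbation, so by symmetry the necessary constraint fails with probability at least $1/2$. That half is fine.

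Part (a) has a genuine gap, and it is an instructive one: you wrote down the correct reduction and then talked yourself out of it. With $\frac{1}{n}\Xa^T\Xa=I_{q\times q}$, the necessary KKT condition on the $S$-block reads $\mathrm{sign}\bigl(\truebetaj+Z_j-\lam\,\mathrm{sign}(\truebetaj)\bigr)=\mathrm{sign}(\truebetaj)$ with $Z_j=e_j^T\frac{1}{n}\Xa^T\e$; for $\truebetaj>0$ this forces $\truebetaj+Z_j>\lam>0$, so the event $\{Z_j>-\truebetaj\}$ is necessary for sign recovery (the $-\lam$ shrinkage term only helps the upper bound, so it can be dropped). This is exactly the containment $\{\hat\beta_j^{(1)}>0\}\subseteq\{\frac{1}{n}X_j^T\e>-\truebetaj\}$ that you derived and then dismissed with ``that has probability near $1$.'' But ``near $1$'' is the whole point: the theorem does not claim the recovery probability is small, only that it is bounded away from $1$ by a quantified Gaussian tail. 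Conditional on $\X$, $Z_j\sim N(0,\truebetaj^2/c_{n,j}^2)$ by the variance computation you already did, so $P[Z_j>-\truebetaj]=\Phi(c_{n,j})=1-(1-\Phi(c_{n,j}))\le 1-\frac{e^{-c_{n,j}^2/2}}{\sqrt{2\pi}(1+c_{n,j})}$ by the standard lower bound on the Gaussian tail, and choosing the coordinate achieving $c_n=\min_j c_{n,j}$ gives the stated bound (the map $t\mapsto 1-e^{-t^2/2}/(\sqrt{2\pi}(1+t))$ is increasing). Hence $c_n\to\infty$ is necessary for the probability to tend to $1$. No appeal to feasibility of the $S^c$-subgradient is needed in part (a), contrary to what your closing paragraph suggests; the event really is ``the $j$th noise coordinate is not too negative,'' and your proposal as written never commits to a correct event or a correct direction of inequality, ending instead with the garbled claim that the necessary half-line has probability ``at most $1-\Phi(c_{n,j})$,'' which would make recovery nearly impossible rather than merely not certain.
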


A proof of Theorem \ref{theorem:NC} can be found in Appendix \ref{proof:NC}.

Statement (a) would hold for the homoscedastic model by removing $diag(|\X \truebeta|)$ from the denominator in Equation \eqref{cn}.  
Equation \eqref{cn} can be viewed as a comparison of the signal strength ($\truebetaj^2$) to the noise level ($var(X_j^T\epsilon))$. Theorem \ref{theorem:NC} shows that the signal strength needs to be large relative to the noise level.

Statement (b) says that the Irrepresentable
Condition \eqref{IC} is necessary for the Lasso's sign consistency.
This necessary condition can also be found in both \cite{ZhaoY2006}
and \cite{Wainwright2009}. \cite{ZhaoY2006} points out that the
Irrepresentable Condition is almost necessary and sufficient for the
Lasso to be sign consistent under the standard homosedastic model when $\p$ and
$\q$ are fixed. \cite{Wainwright2009} says that it is necessary for
the Lasso's sign consistency under the standard model for any $\p$
and $\q$.

The results in this section have addressed the case when $\X$ is fixed. The essential results say that when $SNR$ is large, the Lasso performs well at estimating sign$(\truebeta)$. In the next section $\X$ is random. The randomness of $\X$ allows us to study how the statistical dependence between $\X$ and the noise terms affects the sign consistency of the Lasso.  When $SNR$ is large, the Lasso is robust to this violation of independence.

\section{Gaussian Random Design}
\label{Rdesign} Consider the Gaussian random design where rows
of $\X$ are i.i.d.\  from a $p$-dimensional multivariate Gaussian
distribution with mean $0$ and variance-covariance matrix $\Sigma$.
Assume the diagonal entries of $\Sigma$ are all equal to one. Define the variance-covariance
matrix of the relevant predictors to be $\SXaXa$ and the covariance
between the irrelevant predictors and the relevant predictors to be
$\SXbXa$. Specifically,
\begin{eqnarray*}
\SXaXa &=& E\left(\frac{1}{n}\Xa^T \Xa\right) \ \textrm{ and}\\
\SXbXa &=&  E\left(\frac{1}{n}\Xb^T \Xa\right).
\end{eqnarray*}
Let $\Lambda_{\min}(\cdot)$ denote the minimum eigenvalue of a
matrix and $\Lambda_{\max}(\cdot)$ denote the maximum eigenvalue of
a matrix. To get the main results that allow $p$ to grow with $n$,
the following regularity conditions are needed on the $p \times p$
matrix $\Sigma$. First, for some positive constants $\tilde C_{\min}$ and
$\tilde C_{\max}$ that do not depend on $n$,
\begin{equation}
\Lambda_{\min}(\SXaXa)\geq \tilde C_{\min} > 0\ \mbox{ and } \
\Lambda_{\max}(\Sigma)\leq \tilde C_{\max} < \infty,\label{eign}
\end{equation}
and second, the Irrepresentable Condition,
\begin{equation}
\|\SXbXa(\SXaXa)^{-1}\overrightarrow{b}\|_{\infty}\leq 1-\eta \label{IC2},
\end{equation}
for some constant $\eta \in (0,1]$.  Assumptions \eqref{eign} and \eqref{IC2} are standard assumptions in the previous research under standard models.
Define,
\begin{eqnarray*}
V^*(n, \truebeta, \lam, \const) &=&  \frac{2\lam^2 q}{n\tilde C_{\min}}+\frac{3\const\sqrt{\tilde C_{\max}}\|\truebeta\|_2}{n},\\
A(n,\truebeta,\const)&=& \sqrt{\frac{4\const\|\truebeta\|_2\log n\sqrt{2\max(16q, 4\log n)}}{n\tilde C_{\min}}}\textrm{ and} \\
\tilde \Psi(n, \truebeta, \lam, \const)&= &
A(n,\truebeta,\const)+\frac{2\lam\sqrt{q}}{\tilde C_{\min}}.
\end{eqnarray*}
These quantities defined above are used in the following theorem.
\begin{theorem}{\label{theorem:random:ensemble}}
Consider the sparse Poisson-like model described by \eqref{PL}, under
Gaussian random design. Suppose that the variance-covariance matrix
$\Sigma$ satisfies condition (\ref{eign}) and condition \eqref{IC2}
with unit diagonal. Further, suppose that $\q/n\rightarrow 0$. Then
for any $\lam$ such that
$$\minbeta> \tilde \Psi(n, \truebeta, \lam, \const) ,$$
$\hbet =_s \truebeta$ with probability greater than
$$1-2\exp\left\{-\frac{\lam^2\eta^2}{2V^*(n, \truebeta, \lam, \const)\tilde C_{\max}}+\log(\p-\q)\right\}-(2q+3)\exp\{-0.03n\}-\frac{1+3q}{n}.$$

\end{theorem}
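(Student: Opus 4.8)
The plan is to condition on the Gaussian design $\X$, run the primal--dual witness construction behind Theorem~\ref{theorem:deterministic}, and then control the randomness of $\X$ and of the design-dependent noise $\e$ on a few high-probability events. Given $\X$, the Lasso has a unique solution $\hbet$ with $\hbet =_s \truebeta$ as soon as $\|\hat\beta(S) - \betA\|_\infty < \minbeta$ and $\|z(S^c)\|_\infty < 1$, where $\hat\beta(S)$ is the minimizer of the Lasso objective over vectors supported on $S$, so that $\hat\beta(S) - \betA = \bigl(\tfrac1n\Xa^T\Xa\bigr)^{-1}\bigl[\tfrac1n\Xa^T\e - \lam\,\overrightarrow{b}\,\bigr]$, and, with $M := I - \Xa(\Xa^T\Xa)^{-1}\Xa^T$ (so $M\Xa = 0$) and using the restricted KKT conditions $\Y - \Xa\hat\beta(S) = M\e + n\lam\,\Xa(\Xa^T\Xa)^{-1}\overrightarrow{b}$,
\[
z(S^c) = \tfrac1{\lam n}\,\Xb^T\bigl(\Y - \Xa\hat\beta(S)\bigr) = \tfrac1{\lam n}\,\Xb^T M\e \;+\; \Xb^T\Xa(\Xa^T\Xa)^{-1}\overrightarrow{b}.
\]
Unlike a black-box use of Theorem~\ref{theorem:deterministic}, the Irrepresentable Condition now holds only at the population level \eqref{IC2}, so the $z(S^c)$ bound must be redone; this turns out to be cleaner, because the conditional mean of $z(S^c)$ will be exactly the population incoherence vector $\SXbXa\SXaXa^{-1}\overrightarrow{b}$.

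\textbf{The controlling events.} I would first collect the events that keep the design and the noise in check. (i) By standard Gaussian/Wishart concentration (the lemmas of \cite{Wainwright2009}), with probability at least $1-(2\q+3)\exp\{-0.03n\}$ one has $\Lambda_{\min}(\tfrac1n\Xa^T\Xa) \ge \tilde C_{\min}/2$, $\Lambda_{\max}(\tfrac1n\Xa^T\Xa) \le 2\tilde C_{\max}$, and the further Wishart-type bounds used in the witness argument, where $\q/n\to0$ and \eqref{eign} enter. (ii) By the $\chi^2$ large-deviation bound of Appendix~\ref{LargeDev}, off an event of probability at most $1/n$,
\[
\max_{1\le i\le n}\|x_i(S)\|_2^2 \le 8\tilde C_{\max}\max(4\q,\log n),
\]
which is the new device: it replaces the random conditional covariance $\const\,\mathrm{diag}(|\X\truebeta|)$ of $\e$ by the deterministic operator-norm bound $\|\mathrm{diag}(|\X\truebeta|)\|_{\mathrm{op}} = \max_i|x_i^T\truebeta| \le \|\truebeta\|_2\sqrt{8\tilde C_{\max}\max(4\q,\log n)}$ on that event. (iii) Chebyshev for $\tfrac1{n^2}\|\e\|_2^2$, using $E[\ei^4\mid\X]=3\const^2(x_i^T\truebeta)^2$ and hence $\mathrm{Var}(\|\e\|_2^2\mid\X) = 2\const^2\|\X\truebeta\|_2^2 \le 4\const^2 n\tilde C_{\max}\|\truebeta\|_2^2$ on event (i), which yields $\tfrac1{n^2}\|\e\|_2^2 \le \tfrac{3\const\sqrt{\tilde C_{\max}}\|\truebeta\|_2}{n}$. (iv) Conditionally on $\X$, $\bigl(\tfrac1n\Xa^T\Xa\bigr)^{-1}\tfrac1n\Xa^T\e$ is Gaussian with $j$-th variance $\le \const\max_i|x_i^T\truebeta|\cdot\tfrac2{n\tilde C_{\min}} \le \tfrac{2\const\|\truebeta\|_2\sqrt{8\tilde C_{\max}\max(4\q,\log n)}}{n\tilde C_{\min}}$ on events (i)--(ii), and a Gaussian tail per coordinate plus a union bound over the $\q$ coordinates (tuned so the union costs $O(\q/n)$) gives $\|(\tfrac1n\Xa^T\Xa)^{-1}\tfrac1n\Xa^T\e\|_\infty \le A(n,\truebeta,\const)$. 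Events (ii)--(iv) together account for the $\tfrac{1+3\q}{n}$ term.

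\textbf{Assembling the two conditions.} On the intersection of these events, $\|\hat\beta(S)-\betA\|_\infty \le \|(\tfrac1n\Xa^T\Xa)^{-1}\tfrac1n\Xa^T\e\|_\infty + \lam\|(\tfrac1n\Xa^T\Xa)^{-1}\overrightarrow{b}\|_\infty \le A(n,\truebeta,\const) + \tfrac{2\lam\sqrt{\q}}{\tilde C_{\min}} = \tilde\Psi(n,\truebeta,\lam,\const) < \minbeta$ by hypothesis (using $\|\overrightarrow{b}\|_2=\sqrt\q$ and $\Lambda_{\min}(\tfrac1n\Xa^T\Xa)\ge\tilde C_{\min}/2$), so the first condition holds. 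For the second, I would use $\e \ind \Xb \mid \Xa$ from \eqref{PL}: write $\Xb = \Xa\,\SXaXa^{-1}\SXaXb + E_b$ with $E_b$ having i.i.d.\ rows $N(0,\SXbXb - \SXbXa\SXaXa^{-1}\SXaXb)$ independent of $(\Xa,\e)$; then, using $M\Xa=0$ and $\Xa^T\Xa(\Xa^T\Xa)^{-1}=I$,
\[
z(S^c) = \SXbXa\,\SXaXa^{-1}\,\overrightarrow{b} \;+\; \tfrac1{\lam n}\,E_b^T\bigl(M\e + n\lam\,\Xa(\Xa^T\Xa)^{-1}\overrightarrow{b}\bigr),
\]
whose first term has $\ell_\infty$ norm $\le 1-\eta$ by \eqref{IC2}, and whose second term, conditionally on $(\Xa,\e)$, is Gaussian with $j$-th variance at most $\tfrac{\tilde C_{\max}}{\lam^2 n^2}\bigl(\|M\e\|_2^2 + n^2\lam^2\,\overrightarrow{b}^T(\Xa^T\Xa)^{-1}\overrightarrow{b}\,\bigr) \le \tfrac{\tilde C_{\max}}{\lam^2}\bigl(\tfrac{\|\e\|_2^2}{n^2} + \tfrac{2\lam^2\q}{n\tilde C_{\min}}\bigr) \le \tfrac{\tilde C_{\max}}{\lam^2}\,V^*(n,\truebeta,\lam,\const)$ on the good events (using $\Lambda_{\max}(\SXbXb-\SXbXa\SXaXa^{-1}\SXaXb)\le\Lambda_{\max}(\Sigma)\le\tilde C_{\max}$, $\overrightarrow{b}^T(\Xa^T\Xa)^{-1}\overrightarrow{b} \le \q/(n\Lambda_{\min}(\tfrac1n\Xa^T\Xa))$, $\|M\e\|_2\le\|\e\|_2$, and (iii)). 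A Gaussian tail per coordinate and a union bound over the $\p-\q$ columns of $\Xb$ then give
\[
P\bigl[\|z(S^c)\|_\infty \ge 1\bigr] \le 2\exp\Bigl\{-\tfrac{\lam^2\eta^2}{2\,V^*(n,\truebeta,\lam,\const)\,\tilde C_{\max}} + \log(\p-\q)\Bigr\},
\]
and a union bound over all failure events produces exactly the probability stated in the theorem.

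\textbf{Main obstacle.} The delicate point throughout is the dependence of $\e$ on $\X$: since $Cov(\e\mid\X)=\const\,\mathrm{diag}(|\X\truebeta|)$ is itself random, neither $\Xa^T\e$ nor $\e^T\e$ has the clean $\chi^2$-type behavior available in the homoscedastic case. The two moves that rescue the argument are (a) dominating $|x_i^T\truebeta|$ by $\|\truebeta\|_2\max_i\|x_i(S)\|_2$ and then controlling $\max_i\|x_i(S)\|_2$ by the $\chi^2$ large-deviation bound, so that conditionally on a good $\X$ the noise acts like a Gaussian with a \emph{deterministic} variance proxy; and (b) using $\e\ind\Xb\mid\Xa$, so that $z(S^c)$ stays conditionally Gaussian with conditional mean equal \emph{exactly} to the population incoherence vector, despite the dependence of $\e$ on $\Xa$. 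Verifying that these estimates line up with the precise constants defining $V^*$, $A$, and $\tilde\Psi$ — in particular the $\tfrac{2\lam^2\q}{n\tilde C_{\min}}$ and $\tfrac{3\const\sqrt{\tilde C_{\max}}\|\truebeta\|_2}{n}$ pieces of $V^*$ — is the main bookkeeping effort, and is where I expect the most care to be needed.
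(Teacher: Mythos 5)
Your proposal follows essentially the same route as the paper's proof: conditioning on $(\Xa,\e)$ so that the dual variables on $S^c$ are Gaussian with mean exactly $\SXbXa\SXaXa^{-1}\overrightarrow{b}$ and conditional variance proportional to $M_n=\lam^2\overrightarrow{b}^T(\Xa^T\Xa)^{-1}\overrightarrow{b}+\tfrac{1}{n^2}\e^T[I-\Xa(\Xa^T\Xa)^{-1}\Xa^T]\e$, bounding $M_n$ by $V^*$ via Wishart eigenvalue concentration plus Chebyshev, and taming the design-dependent noise variance through the $\chi^2$ large-deviation control of $\max_i\|x_i(S)\|_2$ — these are precisely the paper's Lemmas \ref{Lemma:V}--\ref{bound} and the $\mathcal M(U)$/$\mathcal M(V)$ analysis. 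The only (immaterial) deviation is that you run the Chebyshev bound for $\|\e\|_2^2/n^2$ conditionally on $\X$, where the random conditional mean $\const\|\X\truebeta\|_1/n^2$ makes the constants slightly harder to close to exactly $1/n$; the paper instead uses the unconditional moments $E[\e_i^2]=\const\varrho\sqrt{\betA^T\SXaXa\betA}$ and $E[\e_i^4]=3\sigma^4\betA^T\SXaXa\betA$, which yields the $3\const\sqrt{\tilde C_{\max}}\|\truebeta\|_2/n$ bound cleanly.
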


A proof of Theorem \ref{theorem:random:ensemble} can be found in
Appendix \ref{random:ensemble:proof}.

Theorem \ref{theorem:random:ensemble} gives a non-asymptotic result on
the Lasso's sparsity pattern recovery property, from which the next corollary can be derived. It
specifies a sequence of $\lam$'s that asymptotically recovers the true sparsity pattern
on a well behaved class of models.  This
class of models restricts the relationship between the data $(\X)$,
the coefficients ($\truebeta$), and the distribution of the noise
($\e$). Basically speaking, $\lambda$ should be chosen such that
\[\mbox{(1) }  \frac{\lam^2\eta^2}{2V^*(n, \truebeta, \lam, \const)\tilde C_{\max}}- \log(\p-\q) \rightarrow \infty \ \mbox{  and (2)  }  \  \minbeta>  \tilde\Psi(n, \truebeta, \lam,\const).\]
Define
\begin{eqnarray*}
&& \tilde{\Gamma}(n,\truebeta,\const)= \\
&&n\eta^2 \left[ 4q\log(p-q+1)\tilde C_{\max}/\tilde{C}_{\min}+\frac{96\const q \|\truebeta\|_2 \log (\p-\q+1) \sqrt{\tilde C_{\max}^3}}{[\minbeta - A(n,\truebeta,\const)]^2\tilde{ C}_{\min}^2}\right]^{-1} .
\end{eqnarray*}
\begin{corollary}\label{corollary:general:random}
As in Theorem \ref{theorem:random:ensemble}, consider the
sparse Poisson-like model described by \eqref{PL}, under Gaussian random
design. Suppose the variance-covariance matrix $\Sigma$ satisfies
condition (\ref{eign}) and condition \eqref{IC2} with unit diagnal.
Further, suppose that $\minbeta > A(n,\truebeta,\const)$  and $\q/n\rightarrow 0$.
Take $\lam$ such that
$$\lam=\frac{[\minbeta - A(n,\truebeta,\const)]\tilde{C}_{\min}}{ 4\sqrt{q}},$$
then  $\hbet =_s \truebeta$ with probability greater than
\begin{equation*}
1-2\exp\left\{-{\log(p-q+1)[\tilde\Gamma(n,\truebeta,\const) - 1]}\right\} -(2q+3)\exp\{-0.03 n\}-\frac{1+3q}{n}.
\end{equation*}
If
\begin{equation}
n/[q\log(p-q+1)]\rightarrow \infty \  \mbox{   and   } \ \frac{n[\minbeta - A(n,\truebeta,\const)]^2}{\const  \|\truebeta\|_2\ \q \log(\p-\q+1)}\rightarrow
\infty,\label{qnp}
\end{equation} then $P[\hbet =_s \truebeta]$
converges to one.
\end{corollary}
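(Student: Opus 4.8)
Throughout, abbreviate $A = A(n,\truebeta,\const)$. The plan is to read this corollary off Theorem \ref{theorem:random:ensemble} after substituting the stated value of $\lam$. First I would check admissibility: since $\lam = (\minbeta - A)\tilde C_{\min}/(4\sqrt{\q})$ gives $2\lam\sqrt{\q}/\tilde C_{\min} = (\minbeta - A)/2$, the quantity $\tilde\Psi(n,\truebeta,\lam,\const)$ equals $A + (\minbeta - A)/2 = (\minbeta + A)/2$, which is strictly below $\minbeta$ precisely because the corollary assumes $\minbeta > A$. Together with $\q/n \to 0$ this verifies the hypotheses of Theorem \ref{theorem:random:ensemble}, and $\lam > 0$ for the same reason.

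Next I would carry out the one genuine computation. Substituting $\lam^2 = (\minbeta - A)^2\tilde C_{\min}^2/(16\q)$ into $V^*(n,\truebeta,\lam,\const)$ collapses the first summand to $(\minbeta - A)^2\tilde C_{\min}/(8n)$, so that
\[
V^*(n,\truebeta,\lam,\const) = \frac{1}{n}\left[\frac{(\minbeta-A)^2\tilde C_{\min}}{8} + 3\const\sqrt{\tilde C_{\max}}\,\|\truebeta\|_2\right].
\]
Plugging this back into the exponent $\lam^2\eta^2/(2V^*\tilde C_{\max})$ and clearing denominators, a short computation gives
\[
\frac{\lam^2\eta^2}{2V^*(n,\truebeta,\lam,\const)\,\tilde C_{\max}} = \frac{n\eta^2(\minbeta-A)^2\tilde C_{\min}^2}{4\q\,\tilde C_{\max}(\minbeta-A)^2\tilde C_{\min} + 96\,\q\,\const\sqrt{\tilde C_{\max}^3}\,\|\truebeta\|_2},
\]
and dividing top and bottom by $(\minbeta-A)^2\tilde C_{\min}^2$ one recognizes the right-hand side as exactly $\tilde\Gamma(n,\truebeta,\const)\log(\p-\q+1)$. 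Since $\log(\p-\q)\le\log(\p-\q+1)$, the exponent in Theorem \ref{theorem:random:ensemble} is at least $[\tilde\Gamma(n,\truebeta,\const)-1]\log(\p-\q+1)$, and the displayed probability bound follows.

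For the asymptotic statement I would verify that each of the three error terms vanishes under \eqref{qnp}. The first half of \eqref{qnp}, $n/[\q\log(\p-\q+1)]\to\infty$, drives the $4\q\tilde C_{\max}\log(\p-\q+1)/(n\tilde C_{\min})$ summand inside $\tilde\Gamma^{-1}$ to zero; the second half is precisely the statement that the $96\const\q\|\truebeta\|_2\log(\p-\q+1)\sqrt{\tilde C_{\max}^3}/(n(\minbeta-A)^2\tilde C_{\min}^2)$ summand goes to zero; hence $\tilde\Gamma(n,\truebeta,\const)\to\infty$ and $2\exp\{-[\tilde\Gamma(n,\truebeta,\const)-1]\log(\p-\q+1)\}\to 0$. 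The first half of \eqref{qnp} also forces $\q/n\to 0$, so $(1+3\q)/n\to 0$, and $(2\q+3)\exp\{-0.03n\} = \frac{2\q+3}{n}\cdot n\exp\{-0.03n\}\to 0$. Therefore $P[\hbet =_s \truebeta]\to 1$. There is no real obstacle here beyond careful bookkeeping: the only nontrivial step is the algebraic identity $\lam^2\eta^2/(2V^*\tilde C_{\max}) = \tilde\Gamma(n,\truebeta,\const)\log(\p-\q+1)$, which is guaranteed to come out clean because $\tilde\Gamma$ was defined by reverse-engineering exactly this substitution; the thing to double-check is that the two summands in the bracket defining $\tilde\Gamma^{-1}$ correspond one-to-one to the two limits in \eqref{qnp}.
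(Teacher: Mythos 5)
Your proposal is correct and follows essentially the same route as the paper's own proof: substitute the stated $\lam$ to get $\tilde\Psi = (\minbeta + A)/2 < \minbeta$, simplify $\lam^2/V^*$ so that the exponent equals $\tilde\Gamma(n,\truebeta,\const)\log(\p-\q+1)$, absorb the $\log(\p-\q)$ term via $\log(\p-\q)\le\log(\p-\q+1)$, and note that \eqref{qnp} makes both summands in $\tilde\Gamma^{-1}$ vanish so $\tilde\Gamma\to\infty$. Your explicit check that the $(2\q+3)\exp\{-0.03n\}$ and $(1+3\q)/n$ terms also vanish is a small point the paper leaves implicit.
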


A proof of Corollary \ref{corollary:general:random} can be found in
Appendix \ref{proof:general:random}.

This corollary gives a class of heteroscedastic models  for which
 the Lasso gives a sign consistent estimate of $\truebeta$, when the predictors are from a Gaussian
random ensemble.

The sufficient
conditions require that $\minbeta\geq A(n,\truebeta, \const)$, which
is equivalent to
\begin{equation} \label{nsr:random1}
SNR \geq 8 \tilde C_{\min}^{-1}\log n \sqrt{2\max(4q, \log n)}.
\end{equation}
The sufficient conditions also require the conditions in \eqref{qnp}, which
imply that
\begin{equation} \label{nsr:random2}
SNR = \Omega\left({q\log(p-q+1)}\right).
\end{equation}
These conditions show that when $SNR$ is large, the Lasso can identify the sign of the true predictors.  This result is similar to the result for the fixed design case in the previous section and it is similar to results on the standard model.

The next theorem gives necessary
conditions for the Lasso to be sign consistent. It says that the Irrepresentable Condition is necessary for the sign consistency of the Lasso under the sparse Poisson-like model. This condition is also necessary under the homoscedastic model.

\begin{theorem}[Necessary Conditions]\label{theorem:NC:random}
Consider the sparse Poisson-like model described by \eqref{PL}, under
Gaussian random design. Suppose the variance-covariance matrix
$\Sigma$ satisfies condition (\ref{eign}).
If the Irrepresentable Condition \eqref{IC2} does not hold, specifically, \begin{equation}
\|\SXbXa(\SXaXa)^{-1}\overrightarrow{b}\|_{\infty} \ge 1, \label{NIC}
\end{equation}
then, the Lasso estimate is not sign consistent: $P\left[\hbet =_s
\truebeta \right]\leq \frac{1}{2};$
\end{theorem}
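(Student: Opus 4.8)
The plan is to run the primal--dual (subgradient) argument behind Theorem~\ref{theorem:NC}(b), but conditionally on both the relevant design block $\Xa$ and the noise vector $\e$, so that the heteroscedastic dependence of $\e$ on $\X$ causes no harm. Suppose $\hbet=_s\truebeta$, so that $\hat\beta(S^c)=0$ and $\mathrm{sign}(\hat\beta(S))=\overrightarrow{b}$. The stationarity (KKT) conditions for \eqref{lasso} then produce a subgradient $z$ with $z(S)=\overrightarrow{b}$, $\|z(S^c)\|_\infty\le1$, and $\frac1n\X^T(\Y-\X\hbet)=\lam z$. Since $\truebeta$ is supported on $S$, $\Y-\Xa\hat\beta(S)=\Xa(\betA-\hat\beta(S))+\e$; substituting this into the $S$-block of the KKT system and solving for $\hat\beta(S)$ (using that $\XaXa=\frac1n\Xa^T\Xa$ is invertible, which holds almost surely in the regime of interest) gives $\Y-\Xa\hat\beta(S)=\lam\Xa\XaXa^{-1}\overrightarrow{b}+(I-P)\e$, with $P=\Xa(\Xa^T\Xa)^{-1}\Xa^T$ the projection onto $\mathrm{col}(\Xa)$. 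Reading off the $S^c$-block yields the exact identity
\[
z(S^c)=\XbXa\,\XaXa^{-1}\overrightarrow{b}+\tfrac1{\lam n}\Xb^T(I-P)\e ,
\]
so that $\{\hbet=_s\truebeta\}\subseteq\{\|z(S^c)\|_\infty\le1\}$ for this explicit $z(S^c)$, and it suffices to bound the probability of the right-hand event.

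By the failure of \eqref{IC2}, some coordinate $k\in S^c$ satisfies $|(\SXbXa\SXaXa^{-1}\overrightarrow{b})_k|\ge1$; by symmetry assume $\mu_k:=(\SXbXa\SXaXa^{-1}\overrightarrow{b})_k\ge1$. Write the $k$-th design column as $X_k=\Xa a_k+v_k$, where $a_k=\SXaXa^{-1}\Sigma_{1k}$ is the population regression vector and $v_k$ has i.i.d.\ rows $N(0,\sigma_{v_k}^2)$ with $\sigma_{v_k}^2=1-\Sigma_{k1}\SXaXa^{-1}\Sigma_{1k}$; we may take $\sigma_{v_k}^2>0$, the degenerate case $\sigma_{v_k}^2=0$ (where $X_k$ is a deterministic linear combination of $\Xa$) being dealt with separately. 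The crucial structural fact is that $v_k$ is independent of the \emph{pair} $(\Xa,\e)$. Indeed, $v_k$ is independent of $\Xa$ by the usual Gaussian residual property; and since $\truebeta$ is supported on $S$, the conditional law of $\e$ given $\X$ depends only on $\Xa$, so the model assumption that $\e$ is conditionally independent of $\Xb$ given $\Xa$ forces $v_k$ and $\e$ to be conditionally independent given $\Xa$. Combining $v_k\perp\Xa$ with $v_k\perp\e\mid\Xa$ upgrades to joint independence $v_k\perp(\Xa,\e)$.

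Now condition on $(\Xa,\e)$. Using $\Xa^T(I-P)=0$ and $\XaXa=\frac1n\Xa^T\Xa$, the $k$-th coordinate of the identity above reduces to
\[
z_k=\mu_k+\tfrac1n v_k^T\Xa\XaXa^{-1}\overrightarrow{b}+\tfrac1{\lam n}v_k^T(I-P)\e ,
\]
which, conditionally on $(\Xa,\e)$, is an affine function of the Gaussian vector $v_k$, hence Gaussian with mean $\mu_k$ (both $v_k$-linear terms have conditional mean $0$) and some variance $\tau^2=\tau^2(\Xa,\e)\ge0$. Since $1-\mu_k\le0$,
\[
P\bigl[z_k\le1\mid\Xa,\e\bigr]=\Phi\!\left(\frac{1-\mu_k}{\tau}\right)\le\Phi(0)=\tfrac12 .
\]
Taking expectations over $(\Xa,\e)$ gives $P[\,|z_k|\le1\,]\le P[z_k\le1]\le\tfrac12$, hence $P[\hbet=_s\truebeta]\le\tfrac12$.

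The one genuinely delicate point is the independence claim $v_k\perp(\Xa,\e)$: this is where the dependence of the noise on the design in \eqref{PL} must be controlled, and it is exactly what lets the homoscedastic-style Gaussian conditioning used in \cite{Wainwright2009} and in Theorem~\ref{theorem:NC} go through unchanged. The remaining bookkeeping---existence of the subgradient when $\XaXa$ is singular or the Lasso minimizer is not unique, and the degenerate configuration $\sigma_{v_k}^2=0$---concerns only boundary cases that are straightforwardly seen to be consistent with the bound $P[\hbet=_s\truebeta]\le\frac12$.
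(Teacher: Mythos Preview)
Your argument is correct and is essentially the paper's own proof: the paper conditions on $(\Xa,\e)$, uses that $\Xb\mid(\Xa,\e)\sim\Xb\mid\Xa$ (its Lemma~\ref{Lemma:V}) to see that $V_k=\lam z_k$ is conditionally Gaussian with mean $\lam\mu_k\ge\lam$, and concludes $P[V_k>\lam]\ge\tfrac12$. Your explicit residual decomposition $X_k=\Xa a_k+v_k$ and the careful check that $v_k\perp(\Xa,\e)$ is exactly the content of that lemma unpacked by hand.
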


A proof of Corollary \ref{theorem:NC:random} can be found in
Appendix \ref{proof:NC:random}.

This section identified sufficient conditions for the Lasso to be sign consistent when the design matrix is random and Gaussian.
The sufficient conditions are similar for both fixed and random design matrices. They are also similar for both homoscedastic noise and Poisson-like noise.
In all cases, the conditions require that a measure of the signal to noise ratio is large, see Equations \eqref{nsr:deterministic} and \eqref{nsr:random2} and the inequality in \eqref{nsr:random1}  . 
In the next section, simulations are used to directly compare the performance of the Lasso between the Poisson-like model and the standard homoscedastic model.

\section{Simulation Studies} \label{Simulations}
There are two examples in this section. The first example investigates a peculiarity of the $SNR$ defined in \eqref{SNR};  functions of $\truebeta$ appear in both the signal and in the noise.  The second example compares the model selection performance of the Lasso under the standard model to the model selection performance of the Lasso under the sparse Poisson-like model.
In the first example, all data is generated from the sparse Poisson-like model.  In the second example, the performance of the Lasso is compared between the two models of the noise.     The parameterizations of the standard homoscedastic models differ in only one respect|the noise terms are homoscedastic.  To ensure a fair comparison, the variance of the noise terms in the standard model is set equal to the average variance of the noise terms in the corresponding Poisson-like model.

All simulations were done in R with the LARS package \citep{efron2004least}.

\Example{1} This example studies how the
Lasso is sensitive to the ratio of signal to noise defined earlier. Recall,
\begin{equation}\label{cndef}
SNR = \frac{n[\minbeta]^2}{\const \|\truebeta\|_2}.
\end{equation}
The  theoretical results in the previous sections
suggest that when $SNR$ is large,  the Lasso is
sign consistent.  In $SNR$, $\truebeta$ can affect the ratio in two ways.  As $\|\truebeta\|_2$ grows, so does the variance of the noise term. As $[\minbeta]^2$ shrinks, so does the signal. This first example  investigates both effects, changing the value of $\|\truebeta\|_2$ while keeping $\minbeta$ fixed, and vice-versa.

 Consider an initial model with the parameters such that
$n = 400$, $p = 1000$, $q=20$, $\const = 1$, and each element of the
design matrix $\X$ is drawn independently from $N(0,1)$.
Once $\X$ is drawn, it will be fixed through all of the simulations.
This $\X$ is also used in Example 2.
 $\truebeta$ is designed this way:
\[\truebetaj = \left\{\begin{array}{ll} \beta_{\max} & \textrm{ if } j \le 10 \\ \beta_{\min} & \textrm{ if }  11 \le j \le 20 \\0 & \textrm{ otherwise. } \end{array}\right.\]
Changing $\beta_{\min}$ or $\|\truebeta\|_2$ changes the value
of $SNR$. To investigate the role of these two quantities, there are two simulation designs.  The first simulation design fixes $\minbeta=\beta_{\min}=5$ and changes the value of $\beta_{\max}$.  The second simulation design fixes $\|\beta\|_2$ and changes the value of $\minbeta$. There is one model that is present in both designs. It sets $\beta_{\max} = 40$ and $\beta_{\min} = 5$. In this model that is common to both designs, $\|\beta\|_2=127$ and $SNR=400\times 5^2/127 \approx 78$.

The first simulation design has ten different parameterizations. It sets $\beta_{\min} = 5$ and chooses
$$\beta_{\max}\in
\{100,90,80,70,60,50,40,30,20,10\}.$$
Each of these ten different parameterizations creates a different value of $SNR$. The second simulation design has ten different parameterizations, each fixing $\|\truebeta\|_2$ and altering  $\beta_{\min}$ such that  $SNR$
does not change from the first simulation design (to keep $\|\truebeta\|_2$ fixed,  $\beta_{\max}$ must change accordingly). The values of the parameters
for the two designs are described in the following two tables.

\begin{table}[ht]
\begin{center}
{\caption{The design of the first simulation is described in this table.
It shows the relationship between $\|\truebeta\|_2,
\minbeta$, and $SNR$.  $\minbeta=5$
is fixed. When $\beta_{\max}$ shrinks, $\|\truebeta\|_2 $ also shinks,
increasing $SNR$. The numbers in the table are rounded to the nearest integer. }}

\vspace{.1in}

\begin{tabular}{l|rrrrrrrrrr}
  \hline
$\beta_{\max}$ & 100&90&80&70&60&50&40&30&20&10\\
 $\|\truebeta\|_2$ & 317 &285 & 253 & 222 & 190 & 159 & 127 & 96 & 65 & 35 \\
 $SNR$ & 32&  35 & 39&  45&  53&  63&  78& 104& 153& 283 \\
   \hline
\end{tabular}
\end{center}
\end{table}

\begin{table}[ht]
\begin{center}
{\caption{The design of the second simulation is described in this table.
It shows the relationship between $\minbeta, \beta_{\max}$, and $SNR$. $ \beta_{\min}$ and $\beta_{\max}$ are chosen such that $\|\truebeta\|_2=127$ is fixed and $SNR$ keeps the same values
as in simulation design one. Thus, $\beta_{\min}$ and $\beta_{\max}$ are decided
by $SNR$ and $\|\truebeta\|_2$. $\beta_{\min}=
\sqrt{SNR \|\truebeta\|_2/n}$ and
$\beta_{\max}=\sqrt{\|\truebeta\|_2^2/10-\beta_{\min}^2}$. The numbers in the table are rounded.}}

\vspace{.1in}

\begin{tabular}{l|rrrrrrrrrr}
  \hline
  $\beta_{\min} = \minbeta$ &3.2& 3.3& 3.6& 3.8& 4.1& 4.5& 5.0& 5.8& 7.0 & 9.5\\
$\beta_{\max}$ & 40 &  40 &  40 &  40 &  40 &  40 &  40 &  40 &  40 &  39\\
 $SNR$ & 32&  35 & 39&  45&  53&  63&  78& 104& 153& 283 \\
   \hline
\end{tabular}
\end{center}
\end{table}

For each simulation design, the Monte Carlo estimate for the probability of correctly estimating the signs is plotted against $SNR$ in Figure
\ref{Figure:cnFixed}.  Each point along the solid line in
Figure \ref{Figure:cnFixed} corresponds to simulation design one ($\beta_{min}$ is fixed), and  each point along the dashed line corresponds to simulation design two ($\|\truebeta\|_2$ is fixed).  Success is defined as the existence of a $\lam$ that makes $\hbet =_s \truebeta$. The probability of success for each point is estimated with $500$ trials.

Figure \ref{Figure:cnFixed} shows that as $SNR$ increases, the probability
of success also increases. What is especially remarkable is the similarity between the solid
and dashed lines.  This simulation demonstrates that increasing the elements of $\truebeta$ can both increase and decrease the probability of successfully estimating the signs.  Further, this simulation demonstrates that these effects are well characterized by $SNR$.

\begin{figure}[h!]
\begin{center}
\scalebox{0.6}[0.6]{\includegraphics[]{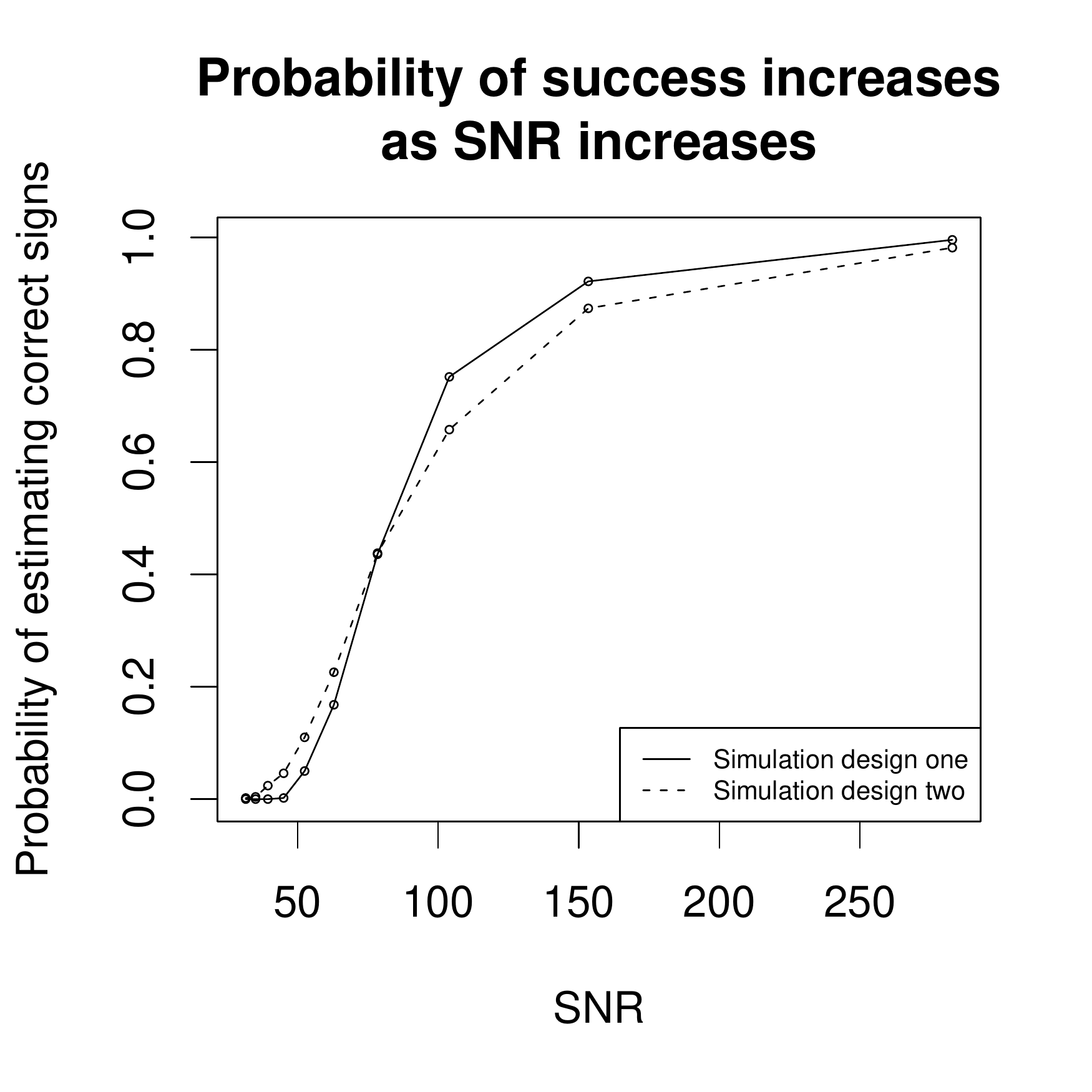}} \vspace{-5 mm}
\caption{
Probability of Success vs. $SNR$ in Example 1. For the solid line,
$\|\truebeta\|_2$ decreases while $\minbeta$ is kept constant. For
the dashed line, $\minbeta$ increases while $\|\truebeta\|_2$ is
kept constant.  The values of $\minbeta$ and $\|\truebeta\|_2$ are
chosen so that $SNR$ as defined in \eqref{cndef} takes the values
specified on the horizontal axis. Each probability is estimated with
500 simulations. } \label{Figure:cnFixed}
\end{center}
\end{figure}

\Example{2 (Comparison to Standard Model)} This example compares the performance of the Lasso applied to homoscedastic data to the performance of the Lasso applied to Poisson-like data.
The design matrix $\X$ is exactly the same (fixed) matrix as in Example 1 and $\truebeta$ follows the constructions specified in Tables (1) and (2).  The only difference between  Example 1 and Example 2 is that the noise terms are homoscedastic in Example 2.
To ensure a fair comparison, the variance of the noise is always set equal to the average variance  of the noise terms in the corresponding Poisson-like model.

There are four lines drawn in Figure \ref{Figure:growBeta}. The solid lines correspond to simulation design one ($\| \truebeta \|_2$ grows while $\minbeta$ is held constant). The dashed lines corresponds to simulation design two ($\minbeta$ shrinks while $\| \truebeta \|_2$ is held constant). The bold lines correspond to the simulations on homoscedastic data. The thinner lines are identical to the lines in Figure \ref{Figure:cnFixed}. They are included to compare the performance of the Lasso on homoscedastic data to the performance of the Lasso on Poisson-like data.

Exactly as in Example 1, success is defined as the existence of a $\lam$ which makes $\hbet
=_s \truebeta$. The probability of success for each point is
estimated with 500 trials.

\begin{figure}[h!]
\begin{center}
\scalebox{.6}{\includegraphics[]{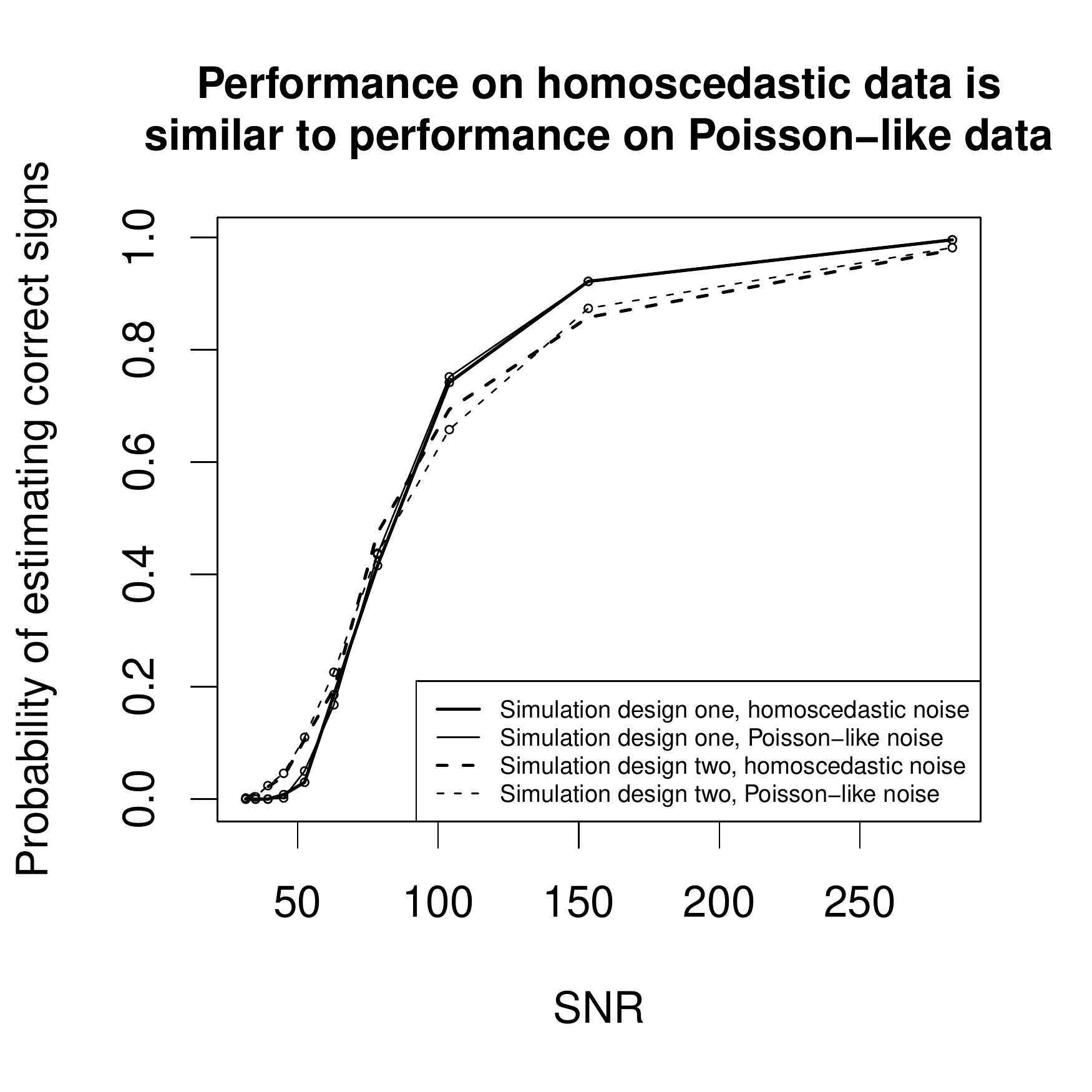}}
 \vspace{-5 mm}
 \caption{
 The bold solid line nearly covers the thinner solid line. This demonstrates how similar the results are for the homoscedastic data and the Poisson-like data. The same statement holds for the dashed lines.
 } \label{Figure:growBeta}
\end{center}
\end{figure}

In the Poisson-like model, the variance of the noise term is dependent on the predicting variables. In the standard model, the noise term is homoscedastic, independent of the predicting variables. Yet, Figure \ref{Figure:growBeta} demonstrates how similar the performance of the Lasso is under  these two different models of the noise. In the figure, the thinner lines are nearly indistinguishable from the bold lines, showing that the Lasso is robust to one type of heteroscedastic noise.

\section{Conclusion}
\label{conclusion}

There is a significant amount of research dedicated to understanding how the Lasso (and similar methods) perform under the standard homoscedastic model. However, in practice, data does not necessarily have the features described by the standard model. This paper aims to understand if the sign consistency of the Lasso is robust to the heteroscedastic errors in the Poisson-like
model. This model is motivated by certain problems in high-dimensional medical imaging \citep{fessler2000statistical}. The key feature of the model is that, for each observation, the variance of the noise term is proportional to the absolute value of the expectation of the observation ($var(\epsilon_i) \propto |E(Y_i)|$). In the Poisson-like model, like all heteroscedastic models, the noise term is \textit{not} independent of the design matrix.

In this paper, we analyzed the sign consistency of the standard Lasso when the data comes from the sparse Poisson-like model, showing that the Lasso is robust to one type of violation to the assumption of homoscedasticity. Theorems \ref{theorem:deterministic} and \ref{theorem:random:ensemble} give non-asymptotic results for the Lasso's sign consistency property under the sparse Poisson-like model for both deterministic design matrices and  random Gaussian ensemble design matrices. Followed by these non-asymptotic results, a suitable $\lam$ was chosen such that
under sufficient conditions the Lasso is sign consistent.   We also studied
how sensitive the Lasso is to the heteroscedastic model by finding
necessary conditions for sign consistency.  The theoretical results for the sparse Poisson-like model are similar to results for the standard model when $SNR$ are matched.  In both models, for the Lasso to be sign consistent, it is essential that a measure of the signal to noise ratio is large.  The simulations demonstrated what our theory predicted, the Lasso performs similarly in terms of  model selection performance on both Poisson-like data and homoscedastic data when the variance of the noise is scaled appropriately. These simulations were across multiple choices of $\truebeta$ and multiple choices of the noise level.  Taken as a whole, these results suggest that the Lasso is robust to Possion-like heteroscedastic noise.

\vspace{11pt} 
\noindent {\large\bf Acknowledgment}
{ This work is inspired by a personal communication between Bin
Yu and Professor Peng Zhang from Capital Normal University in
Beijing. We would like to thank Professor Ming Jiang and Vincent Vu
for their helpful comments and suggestions on this paper. Jinzhu Jia and Bin Yu
are  partially supported by
 NSF grants DMS-0605165 and SES-0835531.
Karl Rohe is partially supported by a NSF VIGRE Graduate Fellowship.
Bin Yu is also partially supported by a grant from MSRA.}
\par

\vskip 0.2in

\appendix
\noindent{\huge\bf Appendix}
\section{Proofs}
\subsection{Proof of Theorem 1}
\label{detApp}

To prove the theorem, we need the next Lemma which gives necessary
and sufficient conditions for  the Lasso's sign consistency. They are
important to the asymptotic analysis. \cite{Wainwright2009} gives
this condition which follows from KKT conditions.

\begin{lemma}{\label{KKT}}For linear model $Y=\X\truebeta+\epsilon$,
assume that the matrix $\Xa^T\Xa$ is invertible. Then for any given
$\lam>0$ and any noise term $\e\in R^n$, there exists a Lasso
estimate $\hbet$ which satisfies $\hbet=_s\truebeta$, if and only if
the following two conditions hold
\begin{equation}
\left|\Xb^T\Xa(\Xa^T\Xa)^{-1}\left[\frac{1}{n}\Xa^T\e-\lam
\sign(\betA)\right]-\frac{1}{n}\Xb^T\e\right|\leq \lam, \label{R1}
\end{equation}
\begin{equation}
sign\left(\betA+(\frac{1}{n}\Xa^T\Xa)^{-1}\left[\frac{1}{n}\Xa^T\e-\lam
sign(\betA)\right]\right)=\sign(\betA), \label{R2}
\end{equation}
where the vector inequality and equality are taken elementwise.
Moreover, if \eqref{R1} holds strictly, then
$$\hat{\beta}=(\hbetA,0)$$ is the unique optimal solution to the Lasso problem \eqref{lasso}, where
\begin{equation}
\hbetA=\betA+(\frac{1}{n}\Xa^T\Xa)^{-1}\left[\frac{1}{n}\Xa^T\e-\lam
\sign(\truebeta)\right]. \label{R3}
\end{equation}
\end{lemma}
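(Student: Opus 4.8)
The plan is to verify the Karush--Kuhn--Tucker subgradient conditions for the convex program \eqref{lasso}. Recall that $\bet$ minimizes \eqref{lasso} if and only if there is a vector $z\in\partial\|\bet\|_1$ with $\frac{1}{n}\X^T(\Y-\X\bet)=\lam z$; that is, $z_j=\sign(\beta_j)$ whenever $\beta_j\neq 0$ and $z_j\in[-1,1]$ otherwise. I would run the whole argument after partitioning the coordinates (and the columns $\X=(\Xa,\Xb)$) along $S$ and $S^c$, and using $\Y=\Xa\betA+\e$ throughout since $\truebeta(S^c)=0$.

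For the ``if'' direction, assume \eqref{R1} and \eqref{R2} and take the candidate $\hbet$ defined by $\hbet(S^c)=0$ and $\hbet(S)=\hbetA$ as in \eqref{R3}. Substituting \eqref{R3} into the $S$-block of the stationarity equation makes it collapse to $\frac{1}{n}\Xa^T(\Y-\X\hbet)=\lam\sign(\betA)$, so the $S$-block of $z$ may be taken to be the honest subgradient $\sign(\betA)$, and \eqref{R2} ensures $\sign(\hbetA)=\sign(\betA)$, hence $\hbet=_s\truebeta$. Substituting \eqref{R3} into the $S^c$-block shows that $\frac{1}{n}\Xb^T(\Y-\X\hbet)$ is, up to an overall sign, exactly the vector inside the norm in \eqref{R1}; since that norm is at most $\lam$, the $S^c$-block of $z$ can be chosen in $[-1,1]^{p-q}$. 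So the candidate satisfies KKT and is optimal with the correct signs.

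For the ``only if'' direction, suppose some Lasso optimum $\hbet$ has $\hbet=_s\truebeta$. Then $\hbet(S^c)=0$ and $\sign(\hbet(S))=\sign(\betA)$, so the KKT subgradient obeys $z(S)=\sign(\betA)$; solving the $S$-block of the stationarity equation for $\hbet(S)$ --- this is where invertibility of $\Xa^T\Xa$ enters --- reproduces \eqref{R3} verbatim, and combined with the known sign of $\hbet(S)$ this is precisely \eqref{R2}. The $S^c$-block of the subgradient relation gives $\|z(S^c)\|_\infty\le 1$, and inserting \eqref{R3} turns this into \eqref{R1}. For the uniqueness claim under strict \eqref{R1}, I would first note that $\X\bet$ is constant over the set of minimizers (the loss is a strictly convex function of $\X\bet$, so averaging any two optima forces their fitted vectors to agree), hence $z=\frac{1}{n\lam}\X^T(\Y-\X\bet)$ is the same for every optimum and equals the witness constructed above, which has $\|z(S^c)\|_\infty<1$. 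Any optimal $\tilde\bet$ must then satisfy $\tilde\bet(S^c)=0$, since a nonzero coordinate there would force the corresponding $z_j=\pm 1$; therefore $\Xa\tilde\bet(S)=\X\tilde\bet=\X\hbet=\Xa\hbetA$, and the full column rank of $\Xa$ (equivalent to invertibility of $\Xa^T\Xa$) gives $\tilde\bet=\hbet$.

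Most of the work is bookkeeping: tracking how \eqref{R3} trivializes the $S$-block of the stationarity condition and converts the $S^c$-block into the left-hand side of \eqref{R1}, in both directions. The hard part will be the uniqueness statement, which relies on the two slightly less mechanical facts that all minimizers share the same fitted values and that strict dual feasibility on $S^c$ rules out any further active coordinate there.
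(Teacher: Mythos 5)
Your proof is correct. The paper does not actually prove this lemma itself --- it states it and defers to the citation of Wainwright (2009) --- and your argument (the KKT subgradient characterization split over the blocks $S$ and $S^c$, the primal--dual witness construction showing \eqref{R3} makes the $S$-block stationarity collapse to $\lam\,\sign(\betA)$ and turns the $S^c$-block into the quantity in \eqref{R1}, and the uniqueness step via uniqueness of the fitted vector plus strict dual feasibility on $S^c$ and full column rank of $\Xa$) is precisely the standard proof underlying that citation, with all the sign bookkeeping checking out.
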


As in \cite{Wainwright2009}, we state sufficient conditions for
\eqref{R1} and \eqref{R2}. Define
$$\overrightarrow{b}=\sign(\betA),$$ and denote by $e_i$ the vector
with $1$ in the $i$th position and zeroes elsewhere. Define
$$U_i=e_i^T(\frac{1}{n}\Xa^T\Xa)^{-1}\left[\frac{1}{n}\Xa^T\e-\lam\overrightarrow{b}\right],$$
$$V_j=X_j^T\left\{\Xa(\Xa^T\Xa)^{-1}\lam\overrightarrow{b}-\left[\Xa(\Xa^T\Xa)^{-1}\Xa^T-I)\right]\frac{\e}{n}\right\}.$$
By rearranging terms, it is easy to see that (\ref{R1}) holds
strictly if and only if
\begin{equation}
\mathcal{M}(V)=\left\{\max_{j\in S^c} |V_j|< \lam\right\} \label{C1}
\end{equation} holds. If we define $\minbeta=
\min_{j\in S}|\truebetaj|$ (recall that $S=\{j: \truebetaj\neq 0\}$
is the sparsity index), then the event
\begin{equation}
\mathcal{M}(U)=\left\{\max_{i \in S }|U_i|<
\minbeta\right\},\label{C2}
\end{equation}
is sufficient to guarantee that condition (\ref{R2}) holds.
Finally, a proof of Theorem \ref{theorem:deterministic}.

\vspace{.2 in}

\begin{proof}
This proof is divided into two parts. First we analysis the
asymptotic probability of event $\mathcal{M}(V)$, and then we
analysis the event of $\mathcal{M}(U)$.

\textbf{Analysis of $\mathcal{M}(V):$} Note from \eqref{C1} that
$\mathcal{M}(V)$ holds if and only if    $\frac{\max_{j \in S^c}
|V_j|}{\lam}< 1$. Each random variable $V_j$ is Gaussian with mean
$$\mu_j=\lam X_j^T\Xa(\Xa^T\Xa)^{-1}\overrightarrow{b}.$$

Define
$\tilde{V}_j=X_j^T\left[I-\Xa(\Xa^T\Xa)^{-1}\Xa^T\right]\frac{\e}{n}$,
then $V_j=\mu_j+\tilde{V}_j$. Using condition \eqref{IC}, we have
$|\mu_j|\leq (1-\eta)\lam$ for all $j \in S^c$, from which we obtain
that
$$\frac{1}{\lam} \max_{j\in S^c} |\tilde{V}_j|< \eta \Rightarrow \frac{\max_{j \in
S^c} |V_j|}{\lam}< 1.$$
By the Gaussian comparison result \eqref{GCR} stated in Lemma
\ref{GaussianComparison}, we have
$$P\left[\frac{1}{\lam}\max_{j\in
S^c}|\tilde{V}_j|\geq \eta\right]\leq
2(p-q)\exp\{-\frac{\lam^2\eta^2}{2\max_{j\in S^c}{E(\tilde
V_j^2)}}\}.$$ Since
$$E(\tilde{V}_j^2)=\frac{1}{n^2}X_j^TH [VAR(\e)] HX_j,$$
where $H=I-\Xa(\Xa^T\Xa)^{-1}\Xa^T$ which has maximum eigenvalue
equal to $1$, and $VAR(\e)$ is the variance-covariance matrix of
$\e$, which is a diagonal matrix with the $i$th diagonal element
equal to $\const\times|x_i^T\truebeta|$.

Since $|x_i^T\truebeta|\leq \sqrt{\|x_i(S)\|_2^2\|\truebeta\|_2^2}
\leq \max_i \|x_i(S)\|_2\|\truebeta\|_2$, an operator bound yields
$$E(\tilde{V}_j^2)\leq \frac{\const}{n^2}\max_i
\|x_i(S)\|_2\|\truebeta\|_2\|X_j\|_2^2=\frac{\const}{n}\max_i
\|x_i(S)\|_2\|\truebeta\|_2.$$ Therefore,
\begin{eqnarray*}
P\left[\frac{1}{\lam}\max_j|\tilde{V}_j|\geq \eta\right]&\leq&
2(p-q)\exp\left\{-\frac{n\lam^2\eta^2}{2\const\max_i
\|x_i(S)\|_2\|\truebeta\|_2}\right\}.
\end{eqnarray*}
So, we have
\begin{eqnarray*}
P\left[\frac{1}{\lam}\max_j|V_j|< 1\right]&\geq& 1-
P\left[\frac{1}{\lam}\max_j|\tilde{V}_j|\geq \eta\right]\\&\geq&
1-2(p-q)\exp\left\{-\frac{n\lam^2\eta^2}{2\const\|\truebeta\|_2\max_i
\|x_i(S)\|_2}\right\}.
\end{eqnarray*}

\textbf{Analysis of $\mathcal{M}(U):$}

$$\max_i |U_i|\leq \|(\frac{1}{n}\Xa^T\Xa)^{-1}\frac{1}{n}\Xa^T\e\|_{\infty}+\lam\|(\frac{1}{n}\Xa^T\Xa)^{-1}\overrightarrow{b}\|_{\infty}$$
Define $Z_i:=e_i^T(\frac{1}{n}\Xa^T\Xa)^{-1}\frac{1}{n}\Xa^T\e.$
Each $Z_i$ is a normal Gaussian with mean $0$ and variance
\begin{eqnarray*}
var(Z_i)&=&e_i^T(\frac{1}{n}\Xa^T\Xa)^{-1}\frac{1}{n}\Xa^T
[VAR(\e)]\frac{1}{n}\Xa(\frac{1}{n}\Xa^T\Xa)^{-1}e_i\\
&\leq& \frac{\const\|\truebeta\|_2\max_i\|x_i(S)\|_2}{n C_{\min}}.
\end{eqnarray*}
So, for any $t>0$, by \eqref{GCR}
$$P(\max_{i\in S} |Z_i|\ge t)\leq 2q
\exp\{-\frac{t^2n
C_{\min}}{2\const\|\truebeta\|_2\max_i\|x_i(S)\|_2}\},$$ by taking
$t=\frac{\lam\eta}{\sqrt{C_{\min}}}$, we have
$$P(\max_{i\in S} |Z_i| \ge \frac{\lam\eta}{\sqrt{C_{\min}}})\leq 2\q
\exp\left\{-\frac{n\lam^2\eta^2}{2\const\|\truebeta\|_2\max_i\|x_i(S)\|_2}\right\}.$$
Recall the definition of $\Psi(\X, \truebeta, \lam) =\lam\left[ \eta
\ (C_{\min})^{-1/2}+ \left\|\left(\frac{1}{n}\Xa^T\Xa \right)^{-1}
\overrightarrow{b} \right\|_\infty \right]$, we have
$$P(\max_i |U_i|\geq \Psi(\X,\truebeta,\lam))\leq 2\q
\exp\left\{-\frac{n\lam^2\eta^2}{2\const\|\truebeta\|_2\max_i\|x_i(S)\|_2}\right\}.$$
By condition $\minbeta>\Psi(\X,\truebeta,\lam)$, we have
$$P(\max_i |U_i|<\minbeta)\geq 1-2\q
\exp\left\{-\frac{n\lam^2\eta^2}{2\const\|\truebeta\|_2\max_i\|x_i(S)\|_2}\right\}.$$
At last, we have
$$P\left[\mathcal M(V) \& \  \mathcal M(U) \right]\geq
1-2\p
\exp\left\{-\frac{n\lam^2\eta^2}{2\const\|\truebeta\|_2\max_i\|x_i(S)\|_2}\right\}$$

\end{proof}

\subsection{Proof of Corollary 1}\label{proof:general:fixed}
\begin{proof}
Recall the definition of $\Gamma(\X, \truebeta, \const)$:
\[ \Gamma(\X, \truebeta, \const) = \frac{\eta^2\ SNR}{8\max_i\|x_i(S)\|_2(\eta \ C_{\min}^{-1/2}+\sqrt{q}C_{min}^{-1})^2\log(p+1)},\]
where $SNR = \frac{n[\minbeta]^2}{\const\|\truebeta\|_2}$.
So, \begin{eqnarray*} \frac{n\eta^2}{2\const
\|\truebeta\|_2\max_{i}\|x_i(S)\|_2}=\frac{ 4\Gamma(\X, \truebeta, \const)(\eta\ C_{\min}^{-1/2}+\sqrt{q} \
C_{\min}^{-1})^2\log(\p+1)}{[\minbeta]^2}
\end{eqnarray*}
By taking
\begin{equation*}
\lam=\frac{\minbeta }{ 2\left(\eta \
C_{\min}^{-1/2}+\sqrt{q} \ C_{\min}^{-1}\right)},
\end{equation*}
we have
\begin{eqnarray*}
\Psi(\X,\truebeta,\lam) &= &\lam\left[ \eta \ (C_{\min})^{-1/2}+ \left\|\left(\frac{1}{n}\Xa^T\Xa \right)^{-1} \overrightarrow{b} \right\|_\infty \right]\\
&\leq&  \lam\left[\eta\ C_{\min}^{-1/2}+\sqrt{q}C_{\min}^{-1}\right]\\
&=&\frac{\minbeta}{2}\\
&<&\minbeta,
\end{eqnarray*} and

\begin{eqnarray*}
\frac{n\lam^2\eta^2}{2\const
\|\truebeta\|_2\max_{i}\|x_i(S)\|_2}=\Gamma(\X, \truebeta, \const)\log(p+1).
\end{eqnarray*}
So, the probability bound in Theorem 1
greater than
$$1-2\exp\left\{ -\left({\Gamma(\X, \truebeta, \const)} -1 \right)\log(\p
+1)\right\},$$ which goes to one when $\Gamma(\X, \truebeta, \const,
\alpha)\rightarrow \infty$.

\end{proof}
\subsection{Proof of Theorem 2} \label{proof:NC}

\begin{proof}
First prove (b). Without loss of generality, assume for some $j \in
S^c$,
$X_{j}^T\Xa\left(\Xa^T\Xa\right)^{-1}\overrightarrow{b}=1+\zeta$,
then $V_{j}=\lam(1+\zeta)+\tilde{V}_j$, where
$\tilde{V}_j=-[\Xa\left(\Xa^T\Xa\right)^{-1}\Xa^T-I]\frac{\e}{n}$ is
a Gaussian random variable with mean $0$, so
$P(\tilde{V}_j>0)=\frac{1}{2}$. So, $P(V_j>\lam)\geq \frac{1}{2}$,
which implies that for any $\lam$, Condition \eqref{R1} (a necessary
condition) is violated with probability greater than $1/2$.

For claim (a).  Condition \eqref{R2},
$$sign\left(\betA+(\frac{1}{n}\Xa^T\Xa)^{-1}\left[\frac{1}{n}\Xa^T\e-\lam
sign(\betA)\right]\right)=sign(\betA)$$ is also a necessary
condition for sign consistency. Since
$\frac{1}{n}\Xa^T\Xa=I_{q\times q}$, \eqref{R2} becomes
$$sign\left(\betA+\left[\frac{1}{n}\Xa^T\e-\lam
sign(\betA)\right]\right)=sign(\betA),$$ which implies that
\begin{equation}
sign\left(\betA+\frac{1}{n}\Xa^T\e\right)=sign(\betA).
\label{samesign}
\end{equation}
Without loss of generality, assume for some $j \in S$,
$\truebetaj>0$. Then \eqref{samesign} implies $\truebetaj+Z_j>0,$
where $Z_j=e_j^T\frac{1}{n}\Xa^T\e$ is a Gaussian random variable
with mean $0$, and variance
\begin{eqnarray*}
var(Z_j)&=&e_j^T\frac{1}{n}\Xa^TVAR(\e)\frac{1}{n}\Xa e_j \\
&=& \frac{\const e_j^T\left[\Xa^T diag(|X\truebeta|)\Xa\right]
e_j}{n^2}\\
&=&\frac{\truebetaj^2}{c_{n,j}^2},
\end{eqnarray*} where the last equality uses the definition of
$c_{n,j}^2$ in Theorem 2.  To summarize,
\begin{eqnarray*}
P[\hbet =_s \truebeta]&\leq& P[\truebetaj+Z_j>0]\\
&=&P[Z_j>-\truebetaj]\\
&=&P[Z_j<\truebetaj]\\
& =& 1 - \int_{\truebetaj}^{\infty} \frac{1}{\sqrt{2\pi var(Z_j)}} \exp\{-\frac{x^2}{2 var(Z_j)}\} dx\\
& =& 1 - \int_{\truebetaj/\sqrt{var(Z_j)}}^{\infty} \frac{1}{\sqrt{2\pi }} \exp\{-\frac{x^2}{2}\} dx\\
&\leq& 1-\frac{1}{\sqrt{2\pi}}\int_{\truebetaj/\sqrt{var(Z_j)}}^{\infty} (\frac{x}{1+x}+\frac{1}{(1+x)^2})\exp\{-\frac{x^2}{2}\} dx\\
&=& 1-\frac{\exp\left\{-\frac{\truebetaj^2}{2var(Z_j)}\right\}}{\sqrt{2\pi}(1+\frac{\truebetaj}{\sqrt{var(Z_j)}})}\\
&=&
1-\frac{\exp\left\{-\frac{c_{n,j}^2}{2}\right\}}{\sqrt{2\pi}(1+{c_{n,j}})}.
\end{eqnarray*}

\end{proof}

\subsection{Proofs of Theorem 3}
\label{random:ensemble:proof}

To prove Theorem 3, we need some
preliminary results.

\begin{lemma} \label{Lemma:V}
Conditioned on $\Xa$ and $\e$, the random vector $V$ is Gaussian.
Its mean vector is upper bound as
\begin{equation}
\mid E[V|\e,\Xa]\mid\leq \lam(1-\eta)\textbf{1}.
\end{equation}
Moreover, its conditional covariance takes the form
\begin{equation}
cov[V|\e,\Xa]=M_n\Sigma_{2|1}=M_n[\SXbXb-\SXbXa(\SXaXa)^{-1}\SXaXb],
\end{equation}
where
\begin{equation}
M_n=\lam^2\overrightarrow{b}^T(\Xa^T\Xa)^{-1}\overrightarrow{b}+\frac{1}{n^2}\e^T[I-\Xa(\Xa^T\Xa)^{-1}\Xa^T]\e.
\end{equation}
\end{lemma}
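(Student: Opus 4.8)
The plan is to identify $V$ as a linear image of the irrelevant predictors whose coefficient vector depends only on $(\Xa,\e)$, and then to read off its conditional law from the Gaussian regression of $\Xb$ on $\Xa$. Concretely, with $\overrightarrow{b}=\sign(\betA)$ as in the proof of Theorem~\ref{theorem:deterministic}, set
\[
w \;=\; \Xa(\Xa^T\Xa)^{-1}\lam\overrightarrow{b}\;-\;\left[\Xa(\Xa^T\Xa)^{-1}\Xa^T-I\right]\frac{\e}{n}\;\in\;R^n ,
\]
so that $V=\Xb^T w$ and $w$ is a deterministic function of $\Xa$ and $\e$. The whole computation then rests on two identities. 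First, since $\Xa^T\Xa(\Xa^T\Xa)^{-1}=I$ and $\Xa^T[\Xa(\Xa^T\Xa)^{-1}\Xa^T-I]=0$, one gets $\Xa^T w=\lam\overrightarrow{b}$. Second, the two summands of $w$ lie respectively in the column space of $\Xa$ and in its orthogonal complement (as $I-\Xa(\Xa^T\Xa)^{-1}\Xa^T$ is the projection onto that complement), so by the Pythagorean identity and idempotence,
\[
\|w\|_2^2 \;=\; \lam^2\overrightarrow{b}^T(\Xa^T\Xa)^{-1}\overrightarrow{b}\;+\;\frac{1}{n^2}\e^T\left[I-\Xa(\Xa^T\Xa)^{-1}\Xa^T\right]\e \;=\; M_n .
\]

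Next I would condition on $(\Xa,\e)$. Because the rows of $\X$ are i.i.d.\ $N(0,\Sigma)$ with unit diagonal, the usual conditional-Gaussian formula gives $x_i(S^c)\mid x_i(S)\sim N(\SXbXa\SXaXa^{-1}x_i(S),\,\Sigma_{2|1})$ with $\Sigma_{2|1}=\SXbXb-\SXbXa\SXaXa^{-1}\SXaXb$; independence of the rows then yields $\Xb=\Xa\SXaXa^{-1}\SXaXb+E$, where $E$ has i.i.d.\ rows $N(0,\Sigma_{2|1})$ and is independent of $\Xa$. The model assumption $\e\ind\Xb\mid\Xa$ ensures that, conditionally on $\Xa$, the residual $E$ is independent of $\e$, so conditioning additionally on $\e$ leaves the i.i.d.\ Gaussian law of its rows unchanged. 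Hence, conditionally on $(\Xa,\e)$,
\[
V=\Xb^T w=\SXbXa\SXaXa^{-1}\Xa^T w+E^T w=\lam\,\SXbXa\SXaXa^{-1}\overrightarrow{b}+E^T w ,
\]
and $E^T w=\sum_{i=1}^n w_i\,(\text{$i$th row of }E)$ is a sum of independent $N(0,w_i^2\Sigma_{2|1})$ vectors. Therefore $V\mid(\Xa,\e)$ is Gaussian with mean $\lam\SXbXa\SXaXa^{-1}\overrightarrow{b}$ and covariance $\|w\|_2^2\,\Sigma_{2|1}=M_n\Sigma_{2|1}$; the bound $|E[V\mid\e,\Xa]|\le\lam(1-\eta)\textbf{1}$ is then exactly the Irrepresentable Condition~\eqref{IC2}.

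Almost all of this is routine linear algebra; the one genuinely delicate point is the conditioning step — one must check that enlarging the conditioning $\sigma$-field from $\Xa$ to $(\Xa,\e)$ does not alter the conditional distribution of $\Xb$, which is precisely where the structural assumption $\e\ind\Xb\mid\Xa$ of the Poisson-like model is used. A minor caveat is that the formula for $w$ presupposes $\Xa^T\Xa$ is invertible; this holds on the high-probability event already controlled in the proof of Theorem~\ref{theorem:random:ensemble} via condition~\eqref{eign}, so the lemma is to be read as a statement on that event.
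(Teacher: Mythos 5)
Your proposal is correct and follows essentially the same route as the paper's own proof: both write $V=\Xb^{T}w$ for the same coefficient vector $w$ (called $\vec{\alpha}$ in the paper), read off the conditional Gaussian law of $\Xb$ given $(\Xa,\e)$ using the conditional independence $\e\ind\Xb\mid\Xa$, and apply the Irrepresentable Condition to bound the mean. Your explicit orthogonal (Pythagorean) decomposition showing $\|w\|_2^2=M_n$ merely fills in the step the paper dismisses as ``by careful calculation.''
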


\begin{lemma}\label{lemma:Mn}
Let
$M_1=\lam^2\overrightarrow{b}^T(\Xa^T\Xa)^{-1}\overrightarrow{b}$
and $M_2=\frac{1}{n^2}\e^T[I-\Xa(\Xa^T\Xa)^{-1}\Xa^T]\e$, then
$M_n=M_1+M_2$. We have
\begin{equation}
P\left[\frac{\lam^2q}{2n\tilde C_{\max}}\leq M_1\leq \frac{2\lam^2
q}{n \Tilde C_{\min}}\right]\geq 1-\exp\{-0.03n\},
\end{equation}
\begin{equation}
P\left[M_2\geq \frac{3\const \sqrt{\tilde
C_{\max}}\|\truebeta\|_2}{n}\right]\leq \frac{1}{n}.
\end{equation}
\end{lemma}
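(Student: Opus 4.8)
The plan is to treat the two bounds separately, since $M_1$ depends only on the Gram matrix $\Xa^T\Xa$ while $M_2$, after one reduction, depends only on $\e^T\e$. \textbf{Bound on $M_1$.} Write $M_1=\frac{\lam^2}{n}\,\overrightarrow{b}^T\big(\tfrac1n\Xa^T\Xa\big)^{-1}\overrightarrow{b}$. Since $\overrightarrow{b}=\sign(\betA)\in\{-1,1\}^q$ has $\|\overrightarrow{b}\|_2^2=q$, the Rayleigh-quotient bound gives
\[\frac{\lam^2 q}{n\,\Lambda_{\max}\!\big(\tfrac1n\Xa^T\Xa\big)}\ \le\ M_1\ \le\ \frac{\lam^2 q}{n\,\Lambda_{\min}\!\big(\tfrac1n\Xa^T\Xa\big)},\]
so it suffices to show that, on an event of probability at least $1-\exp\{-0.03n\}$, one has $\Lambda_{\min}\!\big(\tfrac1n\Xa^T\Xa\big)\ge\tilde C_{\min}/2$ and $\Lambda_{\max}\!\big(\tfrac1n\Xa^T\Xa\big)\le 2\tilde C_{\max}$. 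Writing $\Xa=Z\,\SXaXa^{1/2}$ with $Z\in R^{n\times q}$ having i.i.d.\ $N(0,1)$ entries, and using $\tilde C_{\min}\le\Lambda_{\min}(\SXaXa)\le\Lambda_{\max}(\SXaXa)\le\Lambda_{\max}(\Sigma)\le\tilde C_{\max}$, this reduces to the standard non-asymptotic concentration of the extreme singular values of $\tfrac1{\sqrt n}Z$ (the same estimate used in \cite{Wainwright2009}): when $q/n$ is sufficiently small, $\sigma_{\min}(\tfrac1{\sqrt n}Z)$ and $\sigma_{\max}(\tfrac1{\sqrt n}Z)$ both lie in $[1/\sqrt2,\sqrt2]$ with probability at least $1-\exp\{-0.03n\}$; squaring and conjugating by $\SXaXa^{1/2}$ then gives the claimed eigenvalue control.

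\textbf{Bound on $M_2$.} Since $H=I-\Xa(\Xa^T\Xa)^{-1}\Xa^T$ is an orthogonal projection, $0\preceq H\preceq I$, so $0\le M_2\le\frac1{n^2}\e^T\e$ and it is enough to prove $P\big[\e^T\e\ge 3n\const\sqrt{\tilde C_{\max}}\,\|\truebeta\|_2\big]\le 1/n$. The idea is to compute the first two moments of $\e^T\e=\sum_{i=1}^n\e_i^2$ by conditioning on $\X$ and then use Chebyshev. Under the model $\e_i\mid\X\sim N(0,\const|x_i^T\truebeta|)$ independently, so $\e_i^2$ has conditional mean $\const|x_i^T\truebeta|$ and conditional variance $2\const^2(x_i^T\truebeta)^2$; moreover $x_i^T\truebeta=x_i(S)^T\betA\sim N\!\big(0,\betA^T\SXaXa\betA\big)$ with $\betA^T\SXaXa\betA\le\tilde C_{\max}\|\truebeta\|_2^2$. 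Taking expectations gives $E[\e^T\e]=\const\sum_i E|x_i^T\truebeta|\le\sqrt{2/\pi}\,n\const\sqrt{\tilde C_{\max}}\,\|\truebeta\|_2$, and since the conditional means are independent across $i$, the law of total variance yields
\[\mathrm{Var}[\e^T\e]=E\big[\mathrm{Var}(\e^T\e\mid\X)\big]+\mathrm{Var}\big[E(\e^T\e\mid\X)\big]\ \le\ 3\const^2 n\,\tilde C_{\max}\|\truebeta\|_2^2.\]
The threshold $3n\const\sqrt{\tilde C_{\max}}\|\truebeta\|_2$ exceeds $E[\e^T\e]$ by at least $(3-\sqrt{2/\pi})n\const\sqrt{\tilde C_{\max}}\|\truebeta\|_2$, so Chebyshev bounds the probability by $\mathrm{Var}[\e^T\e]$ over the squared gap, i.e.\ by $\tfrac{3}{(3-\sqrt{2/\pi})^2}\cdot\tfrac1n<\tfrac1n$, as required.

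\textbf{Main obstacle.} The $M_1$ step is the random-matrix fact already present in the homoscedastic analysis and needs no change. The substance is the $M_2$ bound: because the Poisson-like model makes $\e$ dependent on $\X$, $\e^T\e/\const$ is not $\chi^2_n$ and its tail cannot be read off directly, which is precisely the complication flagged in the introduction. The workaround---conditioning on $\X$ to write $\e^T\e$ as a weighted sum of independent $\chi^2_1$'s, then combining the law of total variance with the Gaussianity of $x_i^T\truebeta$ to control both moments and invoking Chebyshev---is the one place heteroscedasticity genuinely enters; the only care needed is keeping the absolute constants small enough to land the clean $1/n$.
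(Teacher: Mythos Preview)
Your proposal is correct and follows essentially the same route as the paper: for $M_1$ you use the Rayleigh-quotient bound together with the Davidson--Szarek concentration of extreme singular values (exactly the paper's Corollary~\ref{minmax.eigen}), and for $M_2$ you reduce to $\e^T\e/n^2$, compute the first two moments of $\sum_i\e_i^2$ via conditioning on $\X$, and apply Chebyshev. The only cosmetic difference is that the paper exploits the i.i.d.\ structure of the $\e_i$ directly to get $\mathrm{Var}(\e^T\e)=n\,\mathrm{Var}(\e_1^2)=n\sigma^4(3-\varrho^2)\betA^T\SXaXa\betA$ and chooses the intermediate threshold $n\const(\varrho+\sqrt{3-\varrho^2})\sqrt{\betA^T\SXaXa\betA}$ so that Chebyshev lands exactly on $1/n$ before relaxing to $3\const\sqrt{\tilde C_{\max}}\|\truebeta\|_2$, whereas you bound at the final threshold and check the numerical inequality $3/(3-\sqrt{2/\pi})^2<1$; both are fine.
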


\begin{lemma}{\label{bound}}
\begin{equation}
P\left[\max_{i=1,\ldots n} \|x_i(S)\|_2^2\geq 2\Tilde C_{\max}
\max{(16q,4\log n)}\right]\leq \frac{1}{n}. \label{normofbeta}
\end{equation}
\end{lemma}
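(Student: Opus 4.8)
The plan is to reduce the claimed bound to a standard $\chi^2$ tail estimate applied row-by-row, followed by a union bound. First observe that since the rows of $\X$ are i.i.d.\ $N(0,\Sigma)$, the subvector $x_i(S)$ (the restriction of the $i$th row to the coordinates in $S$) is distributed as $N(0,\Sigma_{11})$, where $\Sigma_{11}$ is the $q\times q$ principal submatrix of $\Sigma$ indexed by $S$. By the variational characterization of eigenvalues, $\Lambda_{\max}(\Sigma_{11}) = \max\{ v^T\Sigma v : \|v\|_2 = 1,\ v \text{ supported on } S\} \le \Lambda_{\max}(\Sigma) \le \tilde C_{\max}$. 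Writing $x_i(S) = \Sigma_{11}^{1/2} z_i$ with $z_i \sim N(0,I_q)$, we get the pointwise inequality $\|x_i(S)\|_2^2 = z_i^T\Sigma_{11} z_i \le \tilde C_{\max}\|z_i\|_2^2$ where $\|z_i\|_2^2 \sim \chi^2_q$, so the event $\{\|x_i(S)\|_2^2 \ge \tilde C_{\max} s\}$ is contained in $\{\|z_i\|_2^2 \ge s\}$ for every threshold $s$.

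Next I would invoke the large deviation bound for the $\chi^2$ distribution (the Laurent--Massart-type inequality stated in Appendix \ref{LargeDev}) to control a single $\|z_i\|_2^2$: it gives $P[\chi^2_q \ge q + 2\sqrt{qt} + 2t] \le e^{-t}$. Taking $t = 2\log n$ and using $2\sqrt{2q\log n} \le q + 2\log n$, this yields $P[\chi^2_q \ge 2q + 6\log n] \le n^{-2}$. Since $2q + 6\log n \le \max(32q,\, 8\log n) = 2\max(16q,\, 4\log n)$ in either regime (check the two cases $4q \ge \log n$ and $4q < \log n$ separately), we obtain
\[
P\big[\|x_i(S)\|_2^2 \ge 2\tilde C_{\max}\max(16q,\, 4\log n)\big] \le P\big[\|z_i\|_2^2 \ge 2\max(16q,\, 4\log n)\big] \le \frac{1}{n^2}.
\]

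Finally, a union bound over $i = 1, \ldots, n$ gives
\[
P\Big[\max_{1\le i\le n}\|x_i(S)\|_2^2 \ge 2\tilde C_{\max}\max(16q,\, 4\log n)\Big] \le n\cdot \frac{1}{n^2} = \frac{1}{n},
\]
which is the claim. The only mildly delicate point — and the step I expect to need the most care — is the bookkeeping of constants: one must pick the deviation parameter $t$ in the $\chi^2$ tail bound large enough that after the union bound the total failure probability is $\le 1/n$, while keeping the resulting threshold dominated by $2\max(16q,4\log n)$ uniformly in the relative sizes of $q$ and $\log n$. This is precisely why the statement involves the maximum of a term in $q$ and a term in $\log n$.
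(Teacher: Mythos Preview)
Your argument is correct and follows essentially the same route as the paper: whiten $x_i(S)$ via $\Sigma_{11}^{1/2}$ to reduce to a $\chi^2_q$ tail, apply a large-deviation bound giving individual probability $\le n^{-2}$, then union bound. The only cosmetic difference is that the paper uses the specific form in Lemma~\ref{lemma:ldchi2}, $P[\chi^2_q>2t]\le \exp(-t[1-2\sqrt{q/t}])$, and plugs in $t=\max(16q,4\log n)$ directly (so $1-2\sqrt{q/t}\ge 1/2$ and $t/2\ge 2\log n$), whereas you invoke the Laurent--Massart parametrization and need the extra AM--GM step $2\sqrt{2q\log n}\le q+2\log n$ to match constants.
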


Proofs of these lemmas can be found in Appendix \ref{lemmaProofs}.
Now, we prove  Theorem 3.

 \textbf{ Analysis of $M(V)$:} Define the event $T=\{M_n\geq v^*\}$, where
$$v^*=\frac{2\lam^2 q
}{n\tilde C_{\min}}+\frac{3\const\sqrt{\Tilde
C_{\max}}\|\truebeta\|_2}{n}.$$ By Lemma \ref{lemma:Mn}, we have
$P[T]\leq \exp\{-0.03n\}+\frac{1}{n}$.

Let $\mu_j=E[V_j|\e,\Xa]$, $Z_j=V_j-\mu_j$, and $Z=(Z_j)_{j\in
S^c}$, then $E[Z|\Xa,\e]=0$ and $cov(Z|\Xa,
\e)=cov(V|\Xa,\e)=M_n\Sigma_{2|1}$.
\begin{eqnarray*}
\max_{j\in S^c} |V_j|&=&\max_{j\in S^c} |\mu_j+Z_j|\\
&\leq&\max_{j\in S^c }[|\mu_j|+|Z_j|]\\
&\leq&(1-\eta)\lam+\max_{j\in S^c }|Z_j|.
\end{eqnarray*} From this inequality, we have
\[ \{\max_{j\in S^c }|Z_j|< \eta \lam\}\subset \{\max_{j\in S^c
}|V_j|< \lam\}. \]
Define $\tilde{Z}$ to be a zero-mean Gaussian with covariance
$v^*\Sigma_{2|1}.$ Since
\begin{eqnarray*}
P\left[\max_{j\in S^c }|Z_j|\geq \eta \lam\mid T^c\right]&\leq&
\sum_{j\in S^c } P\left[|Z_j|> \eta \lam\mid T^c\right]\\
&\leq& (p-q)  \max_{j\in S^c } P\left[|\tilde{Z}_j|> \eta
\lam\right]\\
&\leq&2(p-q)\exp\{-\frac{\eta^2\lam^2}{2v^*\Tilde C_{\max}}\},
\end{eqnarray*}
we have \begin{eqnarray*} P[\max_{j\in S^c}|V_j|\geq \lam]&\leq&
P\left[\max_{j\in
S^c}|Z_j|\geq \lam\mid T^c\right]+P[T]\\
&\leq& 2(p-q)\exp\{-\frac{\eta^2\lam^2}{2v^*\Tilde
C_{\max}}\}+\exp\{-0.03n\}+\frac{1}{n}.
\end{eqnarray*}
This says that $$P[\mathcal M(V)]\geq
1-2(p-q)\exp\{-\frac{\eta^2\lam^2}{2v^*\Tilde
C_{\max}}\}-\exp\{-0.03n\}-\frac{1}{n}.$$

 \textbf{Analysis of $\mathcal M(U)$:} Now we analyze $\max_{j\in
S}|U_j|$.
$$\max_j |U_j| \leq \left\|(\frac{1}{n}\Xa^T\Xa)^{-1}\frac{1}{n}\Xa^T\e\right\|_\infty+  \lam\left\|(\frac{1}{n}\Xa^T\Xa)^{-1}\overrightarrow {b}\right\|_\infty.$$
Define $\Lambda_i(\cdot)$ to be the $i$th largest eigenvalue of a
matrix. Since
\begin{eqnarray*}
\lam\left\|(\frac{1}{n}\Xa^T\Xa)^{-1}\overrightarrow
{b}\right\|_\infty \leq \frac{\lam
\sqrt{q}}{\Lambda_{\min}(\frac{1}{n}\Xa^T\Xa)},
\end{eqnarray*}
by Equation \eqref{RMeigen} in Corollary \ref{minmax.eigen},
we have
\begin{equation*}
P\left[\lam\left\|(\frac{1}{n}\Xa^T\Xa)^{-1}\overrightarrow
{b}\right\|_\infty \leq \frac{2\lam \sqrt{q}}{\Tilde
C_{\min}}\right]\geq 1-2\exp(-0.03n).
\end{equation*}

Let \[ W_i= e_i^T(\frac{1}{n}\Xa^T\Xa)^{-1}\frac{1}{n}\Xa^T\epsilon,
\] then conditioned on $\Xa$, $W_i$ is a Gaussian random variable
with mean $0$, and variance
\begin{eqnarray*}
var(W_i|\Xa)&=&e_i^T(\frac{1}{n}\Xa^T\Xa)^{-1}\frac{1}{n}\Xa^T
[VAR(\e)]\frac{1}{n}\Xa(\frac{1}{n}\Xa^T\Xa)^{-1}e_i\\
&\leq& \frac{\const\|\truebeta\|_2\max_i\|x_i(S)\|_2}{n
\Lambda_{\min}(\frac{1}{n}\Xa^T\Xa)}.
\end{eqnarray*}

Using \eqref{RMeigen}
\begin{equation*}
P\left[\Lambda_i(\frac{1}{n}X^TX)\geq
\frac{1}{2}\Tilde C_{\min}\right]\geq 1-2\exp(-0.03n),
\end{equation*} and Lemma \ref{bound}, we have
\[
\frac{\const\|\truebeta\|_2\max_i\|x_i(S)\|_2}{n
\Lambda_{\min}(\frac{1}{n}\Xa^T\Xa)}\leq
\frac{2\const\|\truebeta\|_2\sqrt{2\Tilde C_{\max} \max{(16q,4\log
n)}}}{n\tilde C_{\min}}
\] with probability no less than $1-2\exp\{-0.03n\}-\frac{1}{n}.$

Define event
$$\mathcal{T}=\left\{\frac{\const\|\truebeta\|_2\max_i\|x_i(S)\|_2}{n
\Lambda_{\min}(\frac{1}{n}\Xa^T\Xa)}\leq
\frac{2\const\|\truebeta\|_2\sqrt{2\Tilde C_{\max} \max{(16q,4\log
n)}}}{n\tilde C_{\min}}\right\},$$ then $P(\mathcal{T})\geq
1-2\exp\{-0.03n\}-\frac{1}{n}.$ From the proof of Lemma
\ref{GaussianComparison}, for any $t>0$,
$$P(|W_i| > t \mid \Xa, \mathcal{T})\leq 2\exp(-\frac{t^2}{2 var(W_i \mid \Xa, \mathcal{T})}).$$ The above is also true if we replace $var(W_i \mid \Xa, \mathcal{T})$ with any upper bound. So, we have
$$P(|W_i|>t\mid \Xa, \mathcal{T})\leq 2\exp\left\{-\frac{t^2}{2\frac{2\const\|\truebeta\|_2\sqrt{2\Tilde C_{\max} \max{(16q,4\log
n)}}}{n\tilde C_{\min}}}\right\}.$$

So,
\begin{eqnarray*}
P(|W_i|>t)&\leq& P(|W_i|>t \ | \mathcal T) +P(\mathcal T^c)\\
&\leq &
2\exp\left\{-\frac{t^2}{2\frac{2\const\|\truebeta\|_2\sqrt{2\Tilde
C_{\max} \max{(16q,4\log n)}}}{n\tilde
C_{\min}}}\right\}+2\exp\{-0.03n\}+\frac{1}{n}.\end{eqnarray*}

By takeing $t=
A(n,\truebeta,\const):=\sqrt{\frac{4\const\|\truebeta\|_2\log
n\sqrt{2\max(16q,4\log n)}}{n\tilde C_{\min}}}$, we have
\begin{eqnarray*} P\left[\max_{i\in S} |W_i|>A(n,\truebeta,\const)\right]
&\leq& \frac{2\q}{n}+2q\exp\{-0.03n\}+\frac{q}{n}\\
&=& \frac{3\q}{n}+2q\exp\{-0.03n\}.
\end{eqnarray*}

Summarize,
\begin{eqnarray*}
P\left[\max_i |U_i| \geq A(n,\truebeta,\const)+\frac{2\lam\sqrt{q}}{\Tilde C_{\min}}\right]&\\
\leq\frac{3\q}{n}+2q\exp\{-0.03n\}+2\exp\{-0.03n\}&.
\end{eqnarray*}

At last, we have
$$P\left[\mathcal M(V) \ \& \ \mathcal M(U)\right]\leq 1-2(p-q)\exp\{-\frac{\eta^2\lam^2}{2v^*\Tilde C_{\max}}\}-(2q+3)\exp\{-0.03n\}-\frac{1+3q}{n}.$$

\subsection{Proofs of Corollary 3}
\label{proof:general:random}

\begin{proof}
By taking
$\lam=\frac{[\minbeta - A(n,\truebeta,\const)]\tilde C_{\min}}{4\sqrt{q}}$,
we have
\begin{eqnarray*}
\tilde \Psi(n, \truebeta, \lam, \const)&= &
A(n,\truebeta,\const)+\frac{2\lam\sqrt{q}}{\tilde C_{\min}}\\
& =& \frac{\minbeta + A(n,\truebeta,\const)}{2}\\
&<& \minbeta,
\end{eqnarray*}
where the last inequality uses the assumption that $\minbeta > A(n,\truebeta,\const)$.

\begin{eqnarray*}
\frac{\lam^2}{V^*(n, \truebeta, \lam, \const)} &=& \frac{\lam^2} {
\frac{2\lam^2 q
}{n\tilde C_{\min}}+\frac{3\const\sqrt{\Tilde C_{\max}}\|\truebeta\|_2}{n}}\\
&=&\frac{1}{\frac{2q}{n\tilde C_{\min}}+{\frac{3\const\sqrt{\Tilde C_{\max}}\|\truebeta\|_2}{n\lam^2}}}\\
&=&\frac{1}{\frac{2q}{n\tilde C_{\min}}+
\frac{48\const q\sqrt{\tilde C_{\max}}\|\truebeta\|_2}{n[\minbeta - A(n,\truebeta,\const)]^2 \tilde C_{\min}^2}}.
\end{eqnarray*}
By the definition of $\tilde \Gamma(n,\truebeta,\const)$, we have that
\begin{equation*}
\frac{\lam^2\eta^2}{2V^*(n, \truebeta, \lam, \const) \tilde C_{\max}} = \log(p-q+1)\tilde \Gamma(n,\truebeta,\const),
\end{equation*}
so the probability
bound in Theorem 3 now becomes,
\begin{eqnarray*}
&&1-2\exp\left\{-\frac{\lam^2\eta^2}{2V^*(n, \truebeta, \lam,
\const)\Tilde C_{\max}}+\log(\p-\q)\right\}-(2q+3)\exp\{-cn\}-\frac{1+3q}{n}\\
&&=1-2\exp\left\{-\log(p-q+1)\tilde \Gamma(n,\truebeta,\const)+\log(\p-\q)\right\}-(2q+3)\exp\{-cn\}\\
&&-\frac{1+3q}{n}\\
&&\geq 1-2\exp\left\{-\log(p-q+1)[\tilde \Gamma(n,\truebeta,\const)-1]\right\}-(2q+3)\exp\{-cn\}-\frac{1+3q}{n}
\end{eqnarray*}
If Condition \eqref{qnp} holds, then $\tilde \Gamma(n,\truebeta,\const, \alpha) \rightarrow \infty$ which  guarantees
$P[\hbet =_s \truebeta ]\rightarrow 1$.
\end{proof}

\subsection{Proof of Theorem
4}\label{proof:NC:random}
\begin{proof}
Without loss of generality, assume
$$e_j^T\SXbXa(\SXaXa)^{-1}sign(\betA) = 1+\zeta,$$
for some $j \in S^c$ and $\zeta \ge 0$. Since $E[V|\Xa,\e]=\lam
\SXbXa(\SXaXa)^{-1}sign(\betA)$, $V_j$, conditioned on $\Xa$ and
$\e$,
 is a Gaussian random variable
with mean $\lam(1+\zeta)$. So $P[V_j >
\lam(1+\zeta)|\Xa,\e]=\frac{1}{2}$, which implies $P[V_j>
\lam|\Xa,\e]\geq \frac{1}{2}$.  Then we have $P(V_j> \lam)\geq
\frac{1}{2}$. So for any $\lam$,
$$P[ \hbet =_s \truebeta ]\leq P[\max_j V_j \leq \lam] \leq \frac{1}{2}.$$
\end{proof}

\subsection{Proofs of Lemma \ref{Lemma:V} -- Lemma
\ref{bound}} \label{lemmaProofs}\mbox{}

\vspace{10 pt}

\textbf{Proof of Lemma \ref{Lemma:V}}
\begin{proof}
Conditioned on $\Xa$ and $\e$, the only random component in $V_j$ is
the column in  the column vector $X_j$, $j\in S^{c}$. We know that
$(\Xb|\Xa,\e)\sim (\Xb|\Xa)$ is Gaussian with mean and covariance
\begin{eqnarray}
E[\Xb^T|\Xa,\e]&=&\SXbXa(\SXaXa)^{-1}\Xa^T,\\
var(\Xb|\Xa)&=& \Sigma_{2|1}=\SXbXb-\SXbXa(\SXaXa)^{-1}\SXaXb.
\end{eqnarray}
Consequently, we have,
\begin{eqnarray*}
&&|E[V|\Xa,\e]|\\
&=&\left|\SXbXa(\SXaXa)^{-1}\Xa^T\left\{\Xa(\Xa^T\Xa)^{-1}\lam\overrightarrow{b}\right. \right.\\
&& \left.\left.-\left[\Xa(\Xa^T\Xa)^{-1}\Xa^T-I\right]\frac{\e}{n}\right\}\right|\\
&=&|\SXbXa(\SXaXa)^{-1}\lam\overrightarrow{b}|\\
&\leq&\lam(1-\eta)\textbf{1},
\end{eqnarray*}
where the last inequality uses Condition \eqref{IC2}.

Now, we compute the elements of the conditional covariance
\[
cov(V_j,V_k|\e, \Xa).
\]

Let $\vec{\alpha}=
\Xa(\Xa^T\Xa)^{-1}\lam\overrightarrow{b}-\left[\Xa(\Xa^T\Xa)^{-1}\Xa^T-I)\right]\frac{\e}{n}$,
then $V_j=X_j^T\vec{\alpha}$. So we have
\[
cov(V_j,V_k|\e, \Xa)=\vec{\alpha}^T
cov(X_j^T,X_k^T|\e,\Xa)\vec{\alpha}=\left[var(\Xb|\Xa)\right]_{jk}\vec{\alpha}^T\vec{\alpha}.
\]
Consequently,
\[
cov(V|\e,\Xa)=\vec{\alpha}^T\vec{\alpha} \
var(\Xb|\Xa)=\vec{\alpha}^T\vec{\alpha}\Sigma_{2|1}=\vec{\alpha}^T\vec{\alpha}[\SXbXb-\SXbXa(\SXaXa)^{-1}\SXaXb].
\]
By careful calculation, we have $\vec{\alpha}^T\vec{\alpha}=M_n$.
\end{proof}

\textbf{Proof of Lemma \ref{lemma:Mn}}

\begin{proof}
Recall that $M_1=\lam
\overrightarrow{b}^T(\Xa^T\Xa)^{-1}\overrightarrow{b} $. So,
$$\frac{\lam q}{\Lambda_{\max}(\Xa^T\Xa)}\leq M_1\leq
\frac{\lam q}{\Lambda_{\min}(\Xa^T\Xa)}.$$ From \eqref{RMeigen}
we have,
$$P\left[\frac{\lam q}{2n\tilde C_{\max}}\leq M_1\leq
\frac{2\lam q}{n\tilde C_{\min}}\right]\geq 1-2\exp(-0.03n).$$

Define $\varrho=E[|Z|],$ where $Z\sim N(0,1)$, then for any random
variable $R\sim N(0,\sigma^2)$, $E[|R|]=\sigma \varrho.$ Since
$x_i^T\truebeta\sim N(0, \betA^T\SXaXa\betA)$, we have
\[
E[|x_i^T\truebeta|]=\sqrt{\betA^T\SXaXa\betA}\varrho.
\]
We know that $M_2\leq \frac{1}{n^2}\e^T\e.$ Since
$E[\e_i^2]=E[E[\e_i^2|\Xa]]=E[\const|x_i^T\beta|]=\const\sqrt{\betA^T\SXaXa\betA}\varrho$,
and
$E[\e_i^4]=E[E[\e_i^4|\Xa]]=3E[\sigma^4|x_i^T\beta|^2]=3\sigma^4{\betA^T\SXaXa\betA}$,
we have
\begin{eqnarray*}
&&P\left[\frac{\sum_i\e_i^2}{n^2}\geq
\frac{\const(\varrho+\sqrt{3-\varrho^2})\sqrt{\betA^T\SXaXa\betA}}{n}\right]\\
&=& P\left[\sum_i\e_i^2-n\const \varrho\sqrt{\betA^T\SXaXa\betA}\geq
n\const\sqrt{3-\varrho^2}\sqrt{\betA^T\SXaXa\betA}\right]\\
&\leq& \frac{n var(\e_i^2)}{n^2 \sigma^4 (3-\varrho^2)\betA^T\SXaXa\betA}\\
&=& \frac{3\sigma^4{\betA^T\SXaXa\betA}-\sigma^4
\betA^T\SXaXa\betA\varrho^2}{n
\sigma^4 (3-\varrho^2)\betA^T\SXaXa\betA}\\
&=& \frac{1}{n}
\end{eqnarray*}
So,
$$P\left[M_2\geq
\frac{\const(\varrho+\sqrt{3-\varrho^2})\sqrt{\betA^T\SXaXa\betA}}{n}\right]\leq
\frac{1}{n}.$$ While $\sqrt{\beta_{1}^T\SXaXa\betA}\leq
{\sqrt{\Tilde C_{\max} }\|\beta\|_2}$ and $\varrho=E(|Z|)\leq
\sqrt{E(|Z|^2)}=1$, where $Z$ is a standard normal random variable,
so
\[\frac{\const(\varrho+\sqrt{3-\varrho^2})\sqrt{\betA^T\SXaXa\betA}}{n}\leq
\frac{3\const\sqrt{\Tilde C_{\max}}\|\beta\|_2}{n}.
\]
Then we have
\[
P[M_2\geq \frac{3\const\sqrt{\Tilde C_{\max}}\|\beta\|_2}{n}]\leq
\frac{1}{n}.
\]
\end{proof}

\textbf{Proof of Lemma \ref{bound}}
\begin{proof}

By lemma \ref{lemma:ldchi2}, we have for any $t>q$,
\[
P\left[\max_{i=1,\ldots n}
\|\Sigma_{11}^{-\frac{1}{2}}x_i(S)\|_2^2\geq 2t\right]\leq
n\exp(-t\left[1-2\sqrt{\frac{q}{t}}\right]).
\] Take $t=\max{(16q, 4\log n)}$, we have
\[
\begin{array}{lll}
\exp(-t\left[1-2\sqrt{\frac{q}{t}}\right])&\leq&\exp(-t\left[1-2\sqrt{\frac{1}{16}}\right])\\
&=&\exp(-\frac{t}{2})\\
&\leq&\frac{1}{n^2},
\end{array}
\]
so,
\[ P\left[\max_{i=1,\ldots n}
\|\Sigma_{11}^{-\frac{1}{2}}x_i(S)\|_2^2\geq 2 \max{(16q,4\log
n)}\right]\leq \frac{1}{n}.
\]
Since $\|\Sigma_{11}^{-\frac{1}{2}}x_i(S)\|_2^2\geq \frac{1}{\Tilde
C_{\max}}\|x_i(S)\|_2^2,$ we have \begin{equation}
P\left[\max_{i=1,\ldots n} \|x_i(S)\|_2^2\geq 2\Tilde C_{\max}
\max{(16q,4\log n)}\right]\leq \frac{1}{n}. \label{normofbeta}
\end{equation}

\end{proof}

\section{Some Gaussian Comparison Results}
\label{appendix:GCR}
\begin{lemma}
\label{GaussianComparison} For any mean zero Gaussian random vector
$(X_1,\ldots,X_n)$, and $t>0$, we have
\begin{equation}
P( \max_{1\leq i\leq n} |X_i| \ge t)\leq 2n
\exp\{-\frac{t^2}{2\max_i E(X_i^2)}\} \label{GCR}
\end{equation}
\end{lemma}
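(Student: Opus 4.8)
The plan is to combine a simple union bound with the standard Chernoff tail estimate for a univariate Gaussian. The key observation is that no joint-distributional hypothesis on $(X_1,\dots,X_n)$ is required beyond each marginal being mean-zero Gaussian; in particular neither independence nor a genuine Gaussian comparison inequality (Slepian/Sudakov--Fernique) is needed. This is precisely what makes \eqref{GCR} applicable to the dependent coordinate vectors that appear in the main proofs, such as the entries of $\tilde V$ in the analysis of $\mathcal{M}(V)$ or of $Z$ in the random-design argument.

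First I would bound the maximum by a sum:
\[
P\Big(\max_{1\le i\le n}|X_i|\ge t\Big)\ \le\ \sum_{i=1}^n P\big(|X_i|\ge t\big).
\]
Then, fixing $i$ and writing $\sigma_i^2 = E(X_i^2)$ so that $X_i\sim N(0,\sigma_i^2)$, I would use symmetry, $P(|X_i|\ge t) = 2P(X_i\ge t)$, together with the Chernoff bound: for every $s>0$,
\[
P(X_i\ge t)\ \le\ e^{-st}\,E\big[e^{sX_i}\big]\ =\ \exp\Big(-st+\tfrac{1}{2}s^2\sigma_i^2\Big),
\]
and choosing the optimizer $s = t/\sigma_i^2$ gives $P(X_i\ge t)\le \exp\{-t^2/(2\sigma_i^2)\}$, hence $P(|X_i|\ge t)\le 2\exp\{-t^2/(2\sigma_i^2)\}$. (If $\sigma_i^2=0$ the bound is trivial since then $X_i\equiv 0$.)

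Finally, since $\sigma_i^2\le \max_{1\le j\le n}E(X_j^2)$ for every $i$, the exponential is largest when the variance is largest, so $\exp\{-t^2/(2\sigma_i^2)\}\le \exp\{-t^2/(2\max_j E(X_j^2))\}$ for all $i$; summing the $n$ identical bounds yields
\[
P\Big(\max_{1\le i\le n}|X_i|\ge t\Big)\ \le\ 2n\exp\Big\{-\frac{t^2}{2\max_i E(X_i^2)}\Big\},
\]
which is \eqref{GCR}. There is no real obstacle in this argument; the only point worth a moment's care is recognizing that the union bound alone suffices and that replacing each $\sigma_i^2$ by the common upper bound $\max_j E(X_j^2)$ is valid because $x\mapsto \exp\{-t^2/(2x)\}$ is increasing in $x>0$.
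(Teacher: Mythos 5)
Your proposal is correct and follows essentially the same route as the paper's own proof: a Chernoff/moment-generating-function bound on each marginal $P(X_i\ge t)\le \exp\{-t^2/(2E(X_i^2))\}$, symmetry to handle $|X_i|$, replacement of each variance by $\max_i E(X_i^2)$, and a union bound over the $n$ coordinates. The only differences are cosmetic (order of the union bound versus the tail bound, and your explicit remark on the degenerate case $\sigma_i^2=0$).
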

\begin{proof}
Note that the generate function of $X_i$ is
$$E(e^{tX_i})=\exp\{\frac{E(X_i^2)t^2}{2}\}.$$ So, for any $t>0$,
$$P(X_i\ge x)=P(e^{tX_i}\ge e^{tx})\leq
\frac{E(e^{tX_i})}{e^{tx}}=\exp\{\frac{E(X_i^2)t^2}{2}-xt\},$$ by
taking $t=\frac{x}{E(X_i^2)}$, we have
$$P(X_i\ge x)\leq \exp\{-\frac{x^2}{2E(X_i^2)}\}.$$
So,
$$P(|X_i|\ge t)=2P(X_i\ge t)\leq 2\exp\{-\frac{t^2}{2E(X_i^2)}\}\leq 2\exp\{-\frac{t^2}{2\max_i E(X_i^2)}\}.$$
So,
$$P(\max_{1\leq i\leq n}|X_i|\ge t)\leq 2n \exp\{-\frac{t^2}{2\max_{i} E(X_i^2)}\}.$$

\end{proof}

\section{Large deviation for $\chi^2$ distribution} \label{LargeDev}
\begin{lemma}
\label{lemma:ldchi2} Let $Z_1,\ldots,Z_n$ be i.i.d.
$\chi^2$-variates with $q$ degrees of freedom. Then for all $t > q$,
we have
\begin{equation}
P\left[\max_{i=1,\ldots,n} Z_i>2t\right]\leq
n\exp(-t\left[1-2\sqrt{\frac{q}{t}}\right]).
\end{equation}
\end{lemma}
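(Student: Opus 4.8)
The plan is to combine a union bound with a Chernoff bound for a single chi-squared variate, and then to clean up the resulting exponent with one elementary inequality. First, by the union bound,
\[
P\!\left[\max_{i=1,\ldots,n} Z_i > 2t\right] \;\le\; \sum_{i=1}^n P[Z_i > 2t] \;=\; n\, P[Z_1 > 2t],
\]
so it suffices to show $P[Z_1 > 2t] \le \exp\bigl(-t[1 - 2\sqrt{q/t}]\bigr)$ whenever $t > q$.

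For the single-variate tail I would use the moment generating function $E[e^{sZ_1}] = (1-2s)^{-q/2}$, valid for $0 < s < 1/2$. Markov's inequality applied to $e^{sZ_1}$ gives $P[Z_1 > 2t] \le e^{-2st}(1-2s)^{-q/2}$ for every such $s$. The convenient choice is $s = \tfrac12\bigl(1 - \sqrt{q/t}\bigr)$, which lies in $(0,1/2)$ precisely because $t > q$ (this is the one place the hypothesis is used); then $1 - 2s = \sqrt{q/t}$, $2st = t - \sqrt{qt}$, and $-\tfrac{q}{2}\log(1-2s) = \tfrac{q}{4}\log(t/q)$, so that
\[
P[Z_1 > 2t] \;\le\; \exp\!\left(-t + \sqrt{qt} + \tfrac{q}{4}\log(t/q)\right).
\]

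It remains to absorb the logarithmic term into a second copy of $\sqrt{qt}$. Writing $u = t/q \ge 1$, the needed bound $\tfrac{q}{4}\log(t/q) \le \sqrt{qt}$ is equivalent to $\tfrac14\log u \le \sqrt{u}$ for $u \ge 1$; this holds at $u = 1$, and the derivative of $\sqrt{u} - \tfrac14\log u$ equals $(2\sqrt{u}-1)/(4u) > 0$ for $u \ge 1$, so the inequality holds throughout. Substituting back gives $P[Z_1 > 2t] \le \exp(-t + 2\sqrt{qt}) = \exp\bigl(-t[1 - 2\sqrt{q/t}]\bigr)$, and multiplying by $n$ completes the proof. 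I do not anticipate any genuine obstacle: the only steps requiring care are verifying that the chosen $s$ is admissible and checking the elementary inequality $\tfrac14\log u \le \sqrt{u}$, both routine.
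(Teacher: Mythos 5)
Your proof is correct. Every step checks out: the union bound, the moment generating function $E[e^{sZ_1}]=(1-2s)^{-q/2}$ for $0<s<1/2$, the admissibility of $s=\tfrac12(1-\sqrt{q/t})$ under the hypothesis $t>q$, the algebra giving the exponent $-t+\sqrt{qt}+\tfrac{q}{4}\log(t/q)$, and the monotonicity argument for $\tfrac14\log u\le\sqrt{u}$ on $u\ge 1$. The paper itself does not prove this lemma; it defers to the cited reference (Obozinski, Wainwright and Jordan, 2008), where the bound is obtained by the same standard Chernoff-plus-union-bound route, so your argument supplies a self-contained proof of exactly the kind the paper is relying on.
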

The proof of this lemma can be found in \cite{obozinski2008union}.
\section{Some useful random matrix results}
\label{RMR} In this appendix, we use some known concentration
inequalities for the extreme eigenvalues of Gaussian random matrices
 \citep{DavidsonS2001} to bound the
eigenvalues of a Gaussian random matrix. Although these results hold
more generally, our interest here is on scalings $(n, q)$ such that
$q/n\rightarrow 0$.

\begin{lemma}[\cite{DavidsonS2001}]\label{DS}
Let $\Gamma \in R^{n\times \q}$ be a random matrix whose entries are
i.i.d.\ from $N(0,1/n)$, $\q \leq n$. Let the singular values of
$\Gamma$ be $s_1(\Gamma)\geq \ldots \geq s_\q(\Gamma)$. Then
$$\max\left\{P\left[s_1(\Gamma)\geq 1 + \sqrt{\frac{q}{n}}+t\right], P\left[s_\q(\Gamma) \leq
1-\sqrt{\frac{q}{n}}-t\right]\right\}< \exp\{-nt^2/2\}.$$
\end{lemma}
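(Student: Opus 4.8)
The final statement is the classical Davidson--Szarek singular-value estimate \cite{DavidsonS2001}; the plan is to follow the standard route built on two ingredients, Gaussian concentration of measure for the deviation from the mean, and Gaussian comparison inequalities for the mean itself. The first move is to reduce to a standard Gaussian matrix: write $\Gamma = G/\sqrt n$, where $G \in \mathbb{R}^{n\times q}$ has i.i.d.\ $N(0,1)$ entries, so that $s_i(\Gamma) = s_i(G)/\sqrt n$ for every $i$. Dividing the two events through by $\sqrt n$ and setting $s = t\sqrt n$, it is enough to prove, for all $s>0$,
\begin{equation*}
P\big[\, s_1(G) \ge \sqrt n + \sqrt q + s \,\big] \le e^{-s^2/2}
\qquad\text{and}\qquad
P\big[\, s_q(G) \le \sqrt n - \sqrt q - s \,\big] \le e^{-s^2/2}.
\end{equation*}

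For the deviation step, regard $G$ as a vector in $\mathbb{R}^{nq}$ equipped with the Euclidean (Frobenius) norm. Weyl's perturbation bound for singular values gives $|s_i(A) - s_i(B)| \le \|A - B\|_{\mathrm{op}} \le \|A - B\|_F$, so the maps $G \mapsto s_1(G)$ and $G \mapsto s_q(G)$ are each $1$-Lipschitz. The Gaussian concentration inequality for Lipschitz functions then yields, for $1$-Lipschitz $f$, both $P[f(G) \ge \mathbb{E} f(G) + s] \le e^{-s^2/2}$ and $P[f(G) \le \mathbb{E} f(G) - s] \le e^{-s^2/2}$; applying this with $f = s_1$ and with $f = s_q$ reduces everything to the mean estimates $\mathbb{E}\, s_1(G) \le \sqrt n + \sqrt q$ and $\mathbb{E}\, s_q(G) \ge \sqrt n - \sqrt q$.

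These mean estimates are the substantive part of the argument, and I expect them to be the main obstacle. Write $s_1(G) = \max_{u \in S^{q-1},\, v \in S^{n-1}} v^T G u$ and $s_q(G) = \min_{u \in S^{q-1}} \max_{v \in S^{n-1}} v^T G u$, and compare the Gaussian process $X_{u,v} = v^T G u$ with the auxiliary process $Y_{u,v} = g^T v + h^T u$, where $g \in \mathbb{R}^n$ and $h \in \mathbb{R}^q$ are independent standard Gaussian vectors. A short covariance computation verifies the hypotheses of the Sudakov--Fernique inequality (for the plain maximum over $(u,v)$) and of Gordon's min--max comparison (for the $\min_u \max_v$ functional), giving $\mathbb{E}\, s_1(G) \le \mathbb{E}\max_{u,v}(g^T v + h^T u) = \mathbb{E}\|g\|_2 + \mathbb{E}\|h\|_2$ and $\mathbb{E}\, s_q(G) \ge \mathbb{E}\min_u\max_v(g^T v + h^T u) = \mathbb{E}\|g\|_2 - \mathbb{E}\|h\|_2$; since $\mathbb{E}\|g\|_2 \le \sqrt n$ and $\mathbb{E}\|h\|_2 \le \sqrt q$, with $\mathbb{E}\|g\|_2$ within a lower-order term of $\sqrt n$ (by the Gaussian Poincar\'e bound $\operatorname{Var}\|g\|_2 \le 1$), the two displayed mean bounds follow.

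It then remains to rescale back to $\Gamma$ and take the maximum of the two tail bounds to obtain $e^{-nt^2/2}$; the strict inequality ``$<$'' in the statement is harmless, since the Gaussian tail estimates used are themselves strict. As an aside, the upper bound on $s_1(G)$ alone could instead be obtained by an $\varepsilon$-net argument over the two spheres together with a union bound, but that approach does not recover the sharp constant $\tfrac12$ in the exponent and does not cleanly control the smallest singular value, so the comparison-inequality route is preferable here.
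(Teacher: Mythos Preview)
The paper does not give its own proof of this lemma: it is quoted verbatim from \cite{DavidsonS2001} and used as a black box in the subsequent corollaries, so there is nothing to compare your argument against within the paper itself. Your sketch is exactly the Davidson--Szarek approach (Gaussian Lipschitz concentration for the deviation, Sudakov--Fernique and Gordon's min--max inequality for the two mean bounds), so you are in fact reconstructing the cited proof rather than offering an alternative.

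One technical point is worth tightening. For the smallest singular value you need $\mathbb{E}\,s_q(G)\ge \sqrt{n}-\sqrt{q}$, whereas Gordon's inequality only delivers $\mathbb{E}\,s_q(G)\ge \mathbb{E}\|g\|_2-\mathbb{E}\|h\|_2$; since $\mathbb{E}\|g\|_2<\sqrt{n}$, the phrase ``within a lower-order term of $\sqrt n$'' does not by itself close the gap. The clean fix, which is what Davidson and Szarek actually do, is to show that the defect $d\mapsto \sqrt{d}-\mathbb{E}\|\chi_d\|$ is nonincreasing in $d$ (equivalently, use the known inequality $\mathbb{E}\chi_n-\mathbb{E}\chi_q\ge \sqrt n-\sqrt q$ for $q\le n$), or alternatively to run the concentration around the median and invoke Gordon's inequality in its probability form. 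Either route gives the exact constant $\sqrt n-\sqrt q$ and hence the stated bound $e^{-nt^2/2}$ after rescaling.
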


Using Lemma \ref{DS}, we now have some useful results.
\begin{lemma}{\label{rmatr}} Let $U\in R^{n\times \q}$ be a random matrix with elements from the
standard normal distribution (i.e., $U_{ij}\sim N(0,1)$, i.i.d.)
Assume that $q/n\rightarrow 0$. Let the eigenvalues of
$\frac{1}{n}U^TU$ be $\Lambda_1(\frac{1}{n}U^TU)\geq \ldots\geq
\Lambda_\q(\frac{1}{n}U^TU)$. Then when
$n$ is big enough,

\begin{equation}P\left[\frac{1}{2} \leq \Lambda_i(\frac{1}{n}U^TU) \leq 2\right]\geq 1 - 2\exp(-0.03n).
\label{eigen:bound}
\end{equation}

\end{lemma}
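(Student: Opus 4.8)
The plan is to reduce the claim to the Davidson--Szarek singular value bound of Lemma~\ref{DS}. Introduce the rescaled matrix $\Gamma = \frac{1}{\sqrt{n}}\,U \in R^{n\times q}$, whose entries are i.i.d.\ $N(0,1/n)$, so that $\frac1n U^T U = \Gamma^T\Gamma$. The eigenvalues of $\Gamma^T\Gamma$ are the squares of the singular values $s_1(\Gamma)\geq\cdots\geq s_q(\Gamma)$ of $\Gamma$, hence $\Lambda_i\!\left(\frac1n U^T U\right) = s_i(\Gamma)^2$ for every $i$. Therefore the event in the lemma, $\{\,\frac12 \leq \Lambda_i \leq 2 \text{ for all } i\,\}$, is exactly $\{\,1/\sqrt{2} \leq s_q(\Gamma)\ \text{ and }\ s_1(\Gamma) \leq \sqrt{2}\,\}$, and it suffices to control the top and bottom singular values of $\Gamma$.

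Next I would apply Lemma~\ref{DS} with a fixed deviation, say $t = 1/4$: it gives $P\!\left[s_1(\Gamma) \geq 1 + \sqrt{q/n} + \tfrac14\right] < \exp(-n/32)$ and $P\!\left[s_q(\Gamma) \leq 1 - \sqrt{q/n} - \tfrac14\right] < \exp(-n/32)$. Since $q/n \to 0$, for all sufficiently large $n$ one has $\sqrt{q/n} \leq 1 - 1/\sqrt{2} - 1/4$ (the right-hand side is a positive constant, roughly $0.043$). This choice forces $1 + \sqrt{q/n} + 1/4 \leq 2 - 1/\sqrt{2} < \sqrt{2}$ and $1 - \sqrt{q/n} - 1/4 \geq 1/\sqrt{2}$, so the two bad events above contain $\{s_1(\Gamma) \geq \sqrt{2}\}$ and $\{s_q(\Gamma) \leq 1/\sqrt{2}\}$ respectively.

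A union bound then yields $P\!\left[\,1/\sqrt{2} \leq s_q(\Gamma),\ s_1(\Gamma) \leq \sqrt{2}\,\right] \geq 1 - 2\exp(-n/32)$, and since $1/32 = 0.03125 > 0.03$ this is at least $1 - 2\exp(-0.03 n)$; passing back through $\Lambda_i(\frac1n U^T U) = s_i(\Gamma)^2$ gives the stated inequality. The only delicate point is the bookkeeping of constants: the deviation $t$ must be small enough that $1 + \sqrt{q/n} + t \leq \sqrt{2}$ and $1 - \sqrt{q/n} - t \geq 1/\sqrt{2}$ once $n$ is large (this is precisely where the hypothesis $q/n \to 0$ enters, and the second inequality is the binding one), yet large enough that $n t^2/2 \geq 0.03\,n$; the value $t = 1/4$ satisfies both, so one should verify that inequality first before committing to the constant.
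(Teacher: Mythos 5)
Your proof is correct and follows essentially the same route as the paper: rescale to $\Gamma = \frac{1}{\sqrt{n}}U$, apply the Davidson--Szarek bound of Lemma~\ref{DS} to the extreme singular values, use $q/n \to 0$ to absorb the $\sqrt{q/n}$ term, and finish with a union bound. Your single choice $t = 1/4$ is in fact slightly cleaner than the paper's lower-tail choice $t_0 = 1 - \tfrac{\sqrt{2}}{2} - 0.1$ (which only gives $\exp\{-n t_0^2/2\} \approx \exp\{-0.019\,n\}$), since $n(1/4)^2/2 = n/32 \geq 0.03\,n$ directly justifies the stated constant.
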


\begin{proof}
Let $\Gamma = \frac{1}{\sqrt{n}} U$, then
$\Lambda_i(\frac{1}{n}U^TU) = s_i^2(\Gamma)$. By Lemma \ref{DS},
$$P\left[s_\q(\Gamma) \leq
1-\sqrt{\frac{q}{n}}-t\right]< \exp\{-nt^2/2\}, $$ by taking $t =
t_0 = 1- \frac{\sqrt{2}}{2} - 0.1$, we have
$$P\left[s_\q(\Gamma) \leq
\frac{\sqrt{2}}{2} + 0.1 -\sqrt{\frac{q}{n}}\right]<
\exp\{-nt_0^2/2\}.$$ Since $ q/ n \rightarrow 0$ by assumption, we
have when n is big enough, $\sqrt{q/n} < 0.1$, then
$$P\left[s_\q(\Gamma) <
\frac{\sqrt{2}}{2} \right]< \exp\{-nt_0^2/2\},$$ which implies that,
for any $i = 1, \ldots, q$,
$$P\left[\Lambda_i(\frac{1}{n}(U^TU)) <
\frac{1}{2}\right]< \exp\{-nt_0^2/2\}.$$ Followed the same
procedures, $$P\left[\Lambda_i(\frac{1}{n}(U^TU)) > 2\right]<
\exp\{-nt_1^2/2\},$$ for $t_1 = \sqrt{2}- 1.1$. Then inequality
\eqref{eigen:bound} holds immediately.
\end{proof}

\begin{corollary}{\label{minmax.eigen}}
Let $X\in R^{n\times \q}$ be a random matrix, of which, the rows are
i.i.d.\ from the normal distribution with mean 0 and covariance
$\Sigma.$ Assume that $0 < \Tilde C_{\min}\leq \Lambda_i(\Sigma)\leq
\Tilde C_{\max} < \infty$ and $\q/n\rightarrow 0$, then when $n$ is big enough,

\begin{equation}
P\left[\frac{1}{2}\Tilde C_{\min}\leq\Lambda_i(\frac{1}{n}X^TX)\leq
2\Tilde C_{\max}\right]\geq 1-2\exp(-0.03n).\label{RMeigen}
\end{equation}
\end{corollary}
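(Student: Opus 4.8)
The plan is to reduce the corollary to Lemma \ref{rmatr} by a whitening argument. Let $\Sigma^{1/2}$ denote the symmetric positive-definite square root of $\Sigma$ (well defined since $\Tilde C_{\min}>0$), and let $U\in R^{n\times q}$ be a random matrix with i.i.d.\ $N(0,1)$ entries. Each row of $U\Sigma^{1/2}$ is a mean-zero Gaussian vector with covariance $\Sigma^{1/2}I\Sigma^{1/2}=\Sigma$, so $U\Sigma^{1/2}$ has the same distribution as $X$; hence it suffices to control the eigenvalues of $\frac1n(U\Sigma^{1/2})^T(U\Sigma^{1/2})=\Sigma^{1/2}\bigl(\frac1n U^TU\bigr)\Sigma^{1/2}$.

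The key step is a deterministic eigenvalue-product inequality: for any symmetric positive semidefinite $M$, $\Lambda_{\min}(\Sigma^{1/2}M\Sigma^{1/2})\ge\Lambda_{\min}(\Sigma)\,\Lambda_{\min}(M)$ and $\Lambda_{\max}(\Sigma^{1/2}M\Sigma^{1/2})\le\Lambda_{\max}(\Sigma)\,\Lambda_{\max}(M)$. I would prove this from the Rayleigh-quotient characterization: for $v\in R^q$ and $w=\Sigma^{1/2}v$,
\[ v^T\Sigma^{1/2}M\Sigma^{1/2}v = w^TMw \ge \Lambda_{\min}(M)\|w\|_2^2 = \Lambda_{\min}(M)\,v^T\Sigma v \ge \Lambda_{\min}(M)\,\Lambda_{\min}(\Sigma)\,\|v\|_2^2, \]
and symmetrically for the maximum with the inequalities reversed. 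Applying this with $M=\frac1n U^TU$ and using the hypotheses $\Lambda_{\min}(\Sigma)\ge\Tilde C_{\min}$ and $\Lambda_{\max}(\Sigma)\le\Tilde C_{\max}$, one sees that on the event that every eigenvalue of $\frac1n U^TU$ lies in $[\tfrac12,2]$, every eigenvalue of $\Sigma^{1/2}(\frac1n U^TU)\Sigma^{1/2}$ lies in $[\tfrac12\Tilde C_{\min},\,2\Tilde C_{\max}]$.

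It then remains only to lower-bound the probability of that event. Since $q/n\to0$, for $n$ large enough $\sqrt{q/n}<0.1$, so Lemma \ref{rmatr}, i.e.\ \eqref{eigen:bound}, applies and shows the event has probability at least $1-2\exp(-0.03n)$; transferring this through the distributional identity that $X$ and $U\Sigma^{1/2}$ have the same law gives \eqref{RMeigen}. Note that the union over the $q$ eigenvalue indices costs nothing, because Lemma \ref{rmatr} already simultaneously controls the smallest and largest eigenvalues of $\frac1n U^TU$ on a single event. The only place where any care is needed is the deterministic eigenvalue-product inequality for $\Sigma^{1/2}M\Sigma^{1/2}$; everything else is bookkeeping.
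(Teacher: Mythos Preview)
Your proposal is correct and is essentially the same argument as the paper's: the paper sets $U=X\Sigma^{-1/2}$ (equivalently your distributional identity $X\stackrel{d}{=}U\Sigma^{1/2}$), invokes Lemma~\ref{rmatr} to place all eigenvalues of $\frac1n U^TU$ in $[\tfrac12,2]$ with the stated probability, and then uses the same eigenvalue-product bound $\Tilde C_{\min}\Lambda_{\min}(\tfrac1n U^TU)\le \Lambda_i(\tfrac1n X^TX)\le \Tilde C_{\max}\Lambda_{\max}(\tfrac1n U^TU)$ to conclude. The only difference is that you spell out the Rayleigh-quotient justification for that sandwich, which the paper states without proof.
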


\begin{proof}%
Let $U=X\Sigma^{-\frac{1}{2}}$, then  $U$
satisfies  the condition in Lemma \ref{rmatr}. Then
$$P\left[\frac{1}{2}\leq\Lambda_i(\frac{1}{n}U^TU)\leq
2\right]\geq 1-2\exp(-0.03n).$$ Since
$$\Tilde C_{\min}\Lambda_1(\frac{1}{n}U^TU)\leq \Lambda_i(\frac{1}{n}X^TX)\leq \Tilde C_{\max}\Lambda_q(\frac{1}{n}U^TU),$$
result \eqref{RMeigen} is obtained immediately.
\end{proof}

\vskip 0.2in

\noindent{\large\bf References}
\begin{description}
\bibitem[\protect\citeauthoryear{Candes and Tao}{Candes and
  Tao}{2005}]{CandesT2005}
{  Candes, E.} {  and} {  Tao, T.} 2005.
\newblock Decoding by linear programming.
\newblock {\em IEEE Trans. Info Theory,\/}~{\em 51(12)}, 4203 -- 4215.

\bibitem[\protect\citeauthoryear{Candes and Tao}{Candes and
  Tao}{2007}]{candes2007dss}
{  Candes, E.} {  and} {  Tao, T.} 2007.
\newblock {The Dantzig selector: statistical estimation when p is much larger
  than n}.
\newblock {\em Annals of Statistics\/}~{\em 35,\/}~6, 2313--2351.

\bibitem[\protect\citeauthoryear{Chatterjee.}{Chatterjee.}{2005}]{Chatterjee20%
05}
{  Chatterjee., S.} 2005.
\newblock An error bound in the sudakov-fernique inequality.
\newblock {\em Technical report, UC Berkeley. arXiv:math.PR/0510424\/}.

\bibitem[\protect\citeauthoryear{Chen, Donoho, and Saunders.}{Chen
  et~al\mbox{.}}{1998}]{ChenDS1998}
{  Chen, S.}, {  Donoho, D.~L.}, {  and} {  Saunders., M.~A.} 1998.
\newblock Atomic decomposition by basis pursuit.
\newblock {\em J. Sci. Computing\/}~{\em 20(1)}, 33--61.

\bibitem[\protect\citeauthoryear{Davidson and Szarek}{Davidson and
  Szarek}{2001}]{DavidsonS2001}
{  Davidson, K.~R.} {  and} {  Szarek, S.~J.} 2001.
\newblock {\em Local operator theory, random matrices, and Banach spaces. In
  Handbook of Banach Spaces, volume 1, pages 317-366.}
\newblock Elsevier, Amsterdan, NL.

\bibitem[\protect\citeauthoryear{Donoho}{Donoho}{2004}]{Donoho2004}
{  Donoho}. 2004.
\newblock For most large undetermined system of linear equations the minimal
  l1-norm near-solution is also the sparsest solution.
\newblock {\em Technical report, Statistics Department, Stanford University\/}.

\bibitem[\protect\citeauthoryear{Donoho, Elad, and Temlyakov}{Donoho
  et~al\mbox{.}}{2006}]{DonohoET2006}
{  Donoho, D.}, {  Elad, M.}, {  and} {  Temlyakov, V.~M.} 2006.
\newblock Stable recovery of sparse overcomplete representations in the
  presence of noise.
\newblock {\em IEEE Trans. Info Theory\/}~{\em 52(1)}, 6--18.

\bibitem[\protect\citeauthoryear{Donoho and Huo.}{Donoho and
  Huo.}{2001}]{DonohoH2001}
{  Donoho, D.} {  and} {  Huo., X.} 2001.
\newblock Uncertainty principles and ideal atomic decomposition.
\newblock {\em IEEE Trans. Info Theory\/}~{\em 47(7)}, 2845 -- 2862.

\bibitem[\protect\citeauthoryear{E.~Candes and Tao.}{E.~Candes and
  Tao.}{2004}]{CandesRT2004}
{  E.~Candes, J.~R.} {  and} {  Tao., T.} 2004.
\newblock Robust uncertainty principles: exact signal reconstruction from
  highly incomplete frequency information.
\newblock {\em Technical report, Applied and Computational Mathematics,
  Caltech\/}.

\bibitem[\protect\citeauthoryear{Efron, Hastie, Johnstone, and
  Tibshirani}{Efron et~al\mbox{.}}{2004}]{efron2004least}
{  Efron, B.}, {  Hastie, T.}, {  Johnstone, I.}, {  and} {  Tibshirani, R.}
  2004.
\newblock {Least angle regression}.
\newblock {\em Annals of statistics\/}, 407--451.

\bibitem[\protect\citeauthoryear{Elad and Bruckstein.}{Elad and
  Bruckstein.}{2002}]{EladB2002}
{  Elad, M.} {  and} {  Bruckstein., A.~M.} 2002.
\newblock A generalized uncertainty principle and sparse representation in
  pairs of bases.
\newblock {\em IEEE Trans. Info Theory,\/}~{\em 48(9)}, 2558 -- 2567.

\bibitem[\protect\citeauthoryear{Elad and Bruckstein.}{Elad and
  Bruckstein.}{2003}]{FeuerN2003}
{  Elad, M.} {  and} {  Bruckstein., A.~M.} 2003.
\newblock On sparse representation in pairs of bases.
\newblock {\em IEEE Trans. Info Theory,\/}~{\em 49(6)}, 1579 -- 1581.

\bibitem[\protect\citeauthoryear{Fessler}{Fessler}{2000}]{fessler2000statistic%
al}
{  Fessler, J.} 2000.
\newblock {Statistical image reconstruction methods for transmission
  tomography}.
\newblock {\em Handbook of Medical Imaging\/}~{\em 2}, 1--70.

\bibitem[\protect\citeauthoryear{Freedman}{Freedman}{2005}]{freedman2005}
{  Freedman, D.} 2005.
\newblock {\em {Statistical models: Theory and practice}}.
\newblock Cambridge University Press.

\bibitem[\protect\citeauthoryear{Fuchs}{Fuchs}{2005}]{fuchs2005recovery}
{  Fuchs, J.} 2005.
\newblock {Recovery of exact sparse representations in the presence of bounded
  noise}.
\newblock {\em IEEE Transactions on Information Theory\/}~{\em 51,\/}~10,
  3601--3608.

\bibitem[\protect\citeauthoryear{Knight and Fu}{Knight and
  Fu}{2000}]{KnightF2000}
{  Knight, K.} {  and} {  Fu, W.~J.} 2000.
\newblock Asymptotics for lasso-type estimators.
\newblock {\em Annals of Statistics,\/}~{\em 28}, 1356 -- 1378.

\bibitem[\protect\citeauthoryear{Ledoux and Talagrand}{Ledoux and
  Talagrand}{1991}]{Ledoux1991}
{  Ledoux, M.} {  and} {  Talagrand, M.} 1991.
\newblock {\em Probability in Banach Spaces: Isoperimetry and Processes.}
\newblock New York, Springer-Verlag.

\bibitem[\protect\citeauthoryear{Lustig, Donoho, Santos, and Pauly}{Lustig
  et~al\mbox{.}}{2008}]{lustig2008compressed}
{  Lustig, M.}, {  Donoho, D.}, {  Santos, J.}, {  and} {  Pauly, J.} 2008.
\newblock {Compressed sensing MRI}.
\newblock {\em IEEE Signal Processing Magazine\/}~{\em 25,\/}~2, 72--82.

\bibitem[\protect\citeauthoryear{Massart.}{Massart.}{2003}]{Massart2003}
{  Massart., P.} 2003.
\newblock {\em Concentration Inequalties and Model Selection.}
\newblock Ecole d'Et$\acute{e}$ de Probabilit$\acute{e}$s,Saint- Flour.
  Springer, New York.

\bibitem[\protect\citeauthoryear{Meinshausen and Buhlmann}{Meinshausen and
  Buhlmann}{2006}]{MeinshausenB2006}
{  Meinshausen, N.} {  and} {  Buhlmann, P.} 2006.
\newblock High-dimensional graphs and variable selection with the lasso.
\newblock {\em Annals of Statistics\/}~{\em 34(3)}, 1436--1462.

\bibitem[\protect\citeauthoryear{Obozinski, Wainwright, Jordan,
  et~al\mbox{.}}{Obozinski et~al\mbox{.}}{2008}]{obozinski2008union}
{  Obozinski, G.}, {  Wainwright, M.}, {  Jordan, M.} 2008.
\newblock {Union support recovery in high-dimensional multivariate regression}.
\newblock {\em stat\/}~{\em 1050}, 5.

\bibitem[\protect\citeauthoryear{Osborne, Presnell, and Turlach}{Osborne
  et~al\mbox{.}}{2000}]{Osborne2000}
{  Osborne, M.~R.}, {  Presnell, B.}, {  and} {  Turlach, B.~A.} 2000.
\newblock On the lasso and its dual.
\newblock {\em Journal of Computational and Graphical Statistics\/}~{\em 9(2)},
  319--37.

\bibitem[\protect\citeauthoryear{Rosset}{Rosset}{2004}]{Rosset2004}
{  Rosset, S.} 2004.
\newblock Tracking curved regularized optimization solution paths.
\newblock {\em NIPS\/}.

\bibitem[\protect\citeauthoryear{Tibshirani}{Tibshirani}{1996}]{Tibshirani1996}
{  Tibshirani, R.} 1996.
\newblock Regression shrinkage and selection via the lasso.
\newblock {\em Journal of the Royal Statistical Society, Series B\/}~{\em
  58(1)}, 267--288.

\bibitem[\protect\citeauthoryear{Tropp.}{Tropp.}{2004}]{Tropp2004}
{  Tropp., J.} 2004.
\newblock Greed is good: algorithmic results for sparse approximation.
\newblock {\em IEEE Trans. Info Theory,\/}~{\em 50(10)}, 2231 -- 2242.

\bibitem[\protect\citeauthoryear{Tropp}{Tropp}{2006}]{Tropp2006}
{  Tropp, J.} 2006.
\newblock Just relax: Convex programming methods for identifying sparse signals
  in noise.
\newblock {\em IEEE Trans. Info Theory\/}~{\em 52(3)}, 1030 -- 1051.

\bibitem[\protect\citeauthoryear{Vardi, Shepp, and Kaufman}{Vardi
  et~al\mbox{.}}{1985}]{vardi1985statistical}
{  Vardi, Y.}, {  Shepp, L.}, {  and} {  Kaufman, L.} 1985.
\newblock {A statistical model for positron emission tomography}.
\newblock {\em Journal of the American Statistical Association\/}~{\em
  80,\/}~389, 8--20.

\bibitem[\protect\citeauthoryear{Wainwright}{Wainwright}{2009}]{Wainwright2009}
{  Wainwright, M.} 2009.
\newblock Sharp thresholds for high-dimensional and noisy recovery of sparsity.
\newblock {\em IEEE Transactions on Information Theory\/}~{\em To appear}.

\bibitem[\protect\citeauthoryear{Zhao and Yu}{Zhao and Yu}{2006}]{ZhaoY2006}
{  Zhao, P.} {  and} {  Yu, B.} 2006.
\newblock On model selection consistency of lasso.
\newblock {\em The Journal of Machine Learning Research\/}~{\em 7}, 2541--2563.

\bibitem[\protect\citeauthoryear{Zhao and Yu}{Zhao and Yu}{2007}]{Zhao2007}
{  Zhao, P.} {  and} {  Yu, B.} 2007.
\newblock Stagewise lasso.
\newblock {\em The Journal of Machine Learning Research\/}~{\em 8}, 2701--2726.

\bibitem[\protect\citeauthoryear{Zou}{Zou}{2006}]{zou2006adaptive}
{  Zou, H.} 2006.
\newblock {The adaptive lasso and its oracle properties}.
\newblock {\em Journal of the American Statistical Association\/}~{\em
  101,\/}~476, 1418--1429.

\end{description}


\vskip .65cm
\noindent
Department of Statistics,
       University of California,
       Berkeley, CA 94720, USA
\vskip 2pt
\noindent
E-mail: (jzjia@stat.berkeley.edu)
\vskip 2pt

\noindent
\vskip .65cm
\noindent
Department of Statistics,
       University of California,
       Berkeley, CA 94720, USA
\vskip 2pt
\noindent
E-mail: (karlrohe@stat.berkeley.edu)
\vskip 2pt

\noindent
\vskip .65cm
\noindent
Department of Statistics,
and Department of Electrical Engineering and Computer Sciences,
       University of California,
       Berkeley, CA 94720, USA
\vskip 2pt
\noindent
E-mail: (binyu@stat.berkeley.edu)
\vskip .3cm


\end{document}